\theoremstyle{plain}
\newtheorem{theorem}{Theorem}[section]
\theoremstyle{definition}
\theoremstyle{remark}
\newcommand*{\defeq}{\stackrel{\text{def}}{=}}
\newcommand{\KL}[2]{\text{KL}\left(#1\Vert #2\right)}
\DeclareMathOperator*{\argmin}{arg\,min}
\icmltitlerunning{Light and Optimal Schrödinger Bridge Matching}
\begin{document}

\twocolumn[
\icmltitle{Light and Optimal Schrödinger Bridge Matching}



\icmlsetsymbol{equal}{*}

\begin{icmlauthorlist}
\icmlauthor{Nikita Gushchin}{equal,skoltech}
\icmlauthor{Sergei Kholkin}{equal,skoltech}
\icmlauthor{Evgeny Burnaev}{skoltech,airi}
\icmlauthor{Alexander Korotin}{skoltech,airi}
\end{icmlauthorlist}

\icmlaffiliation{skoltech}{Skolkovo Institute of Science and Technology}
\icmlaffiliation{airi}{Artificial Intelligence Research Institute}

\icmlcorrespondingauthor{Nikita Gushchin}{n.gushchin@skoltech.ru}
\icmlcorrespondingauthor{Sergei Kholkin}{s.kholkin@skoltech.ru}

\icmlkeywords{Machine Learning, ICML}

\vskip 0.3in
]



\printAffiliationsAndNotice{\icmlEqualContribution} 

\begin{abstract}
Schrödinger Bridges (SB) have recently gained the attention of the ML community as a promising extension of classic diffusion models which is also interconnected to the Entropic Optimal Transport (EOT). 
Recent solvers for SB exploit the pervasive bridge matching procedures. Such procedures aim to recover a stochastic process transporting the mass between distributions given only a transport plan between them. In particular, given the EOT plan, these procedures can be adapted to solve SB. This fact is heavily exploited by recent works giving rise to matching-based SB solvers. The cornerstone here is recovering the EOT plan: recent works either use heuristical approximations (e.g., the minibatch OT) or establish iterative matching procedures which by the design accumulate the error during the training. We address these limitations and propose a novel procedure to learn SB which we call the \textbf{optimal Schrödinger bridge matching}.
It exploits the optimal parameterization of the diffusion process and provably recovers the SB process \textbf{(a)} with a single bridge matching step and \textbf{(b)} with arbitrary transport plan as the input. Furthermore, we show that the optimal bridge matching objective coincides with the recently discovered energy-based modeling (EBM) objectives to learn EOT/SB. Inspired by this observation, we develop a light solver (which we call LightSB-M) to implement optimal matching in practice using the Gaussian mixture parameterization of the adjusted Schrödinger potential. We experimentally showcase the performance of our solver in a range of practical tasks. The code for our solver can be found at \url{https://github.com/SKholkin/LightSB-Matching}.

\end{abstract}

\begin{figure}
    \vspace{-2mm}
    \centering
        \includegraphics[width=0.92\linewidth]{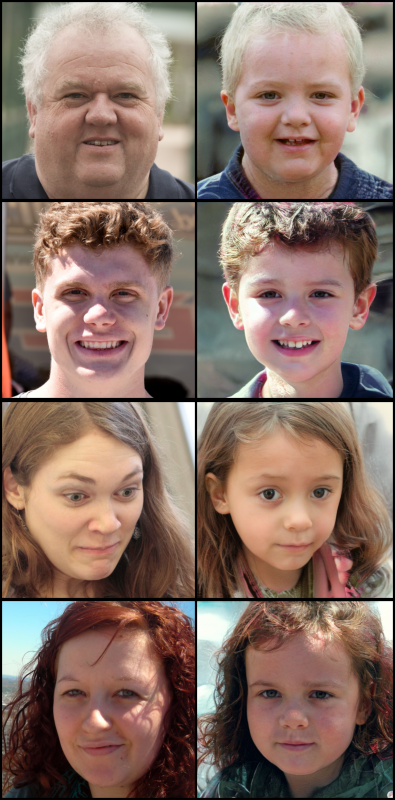}
    \vspace{0.5mm}
    \caption{Unpaired \textit{adult} $\rightarrow$ \textit{child} translation with our LightSB-M solver applied in the latent space of ALAE \citep{pidhorskyi2020adversarial} for 1024x1024 FFHQ images \citep{karras2019style}. \textit{Our LightSB-M solver converges on 4 cpu cores in several minutes}.}
    \label{fig:teaser}
\end{figure}

\vspace{0mm}
\section{Introduction}
\vspace{0mm}

Diffusion models are a powerful type of generative models that show an impressive quality of image generation \citep{ho2020denoising, rombach2022high}. However, they still have several directions for improvement on which the research community is actively working. Some of these directions are: speeding up the generation \citep{wang2022diffusion, song2023consistency}, application to the image-to-image transfer \citep{liu20232} extension to unpaired image transfer \citep{meng2021sdedit} or domain adaptation \citep{vargas2021solving}, including biological tasks with single cell data. 

A promising approach to advance these directions is the development of new theoretical frameworks for learning flows and diffusions. Recently proposed novel techniques such as flow \citep{lipman2022flow} and bridge \citep{shi2023diffusion} matching for flow and diffusion-based models show promising potential for further extending and improving generative and translation models. Furthermore, by exploiting theoretical links between flow and diffusion models with Optimal Transport \citep[OT]{villani2008optimal} and Schrödinger Bridge \citep[SB]{leonard2013survey} problems, several new methods have been proposed to speed up the inference \cite{liu2023instaflow}, to improve the quality of image generation \cite{liu20232}, and to solve unpaired image and domain translations \citep{de2021diffusion, shi2023diffusion}. 

Recent approaches \citep{tong2023simulation, shi2023diffusion,liu2022flow} to OT and SB problems based on flow and bridge matching either use iterative bridge matching-based procedures or employ heuristic approximations (e.g., the minibatch OT) to recover the SB through its relation to the Entropic OT problem. Unfortunately, iterative methods imply solving a sequence of time-consuming optimization problems and experience error accumulation. In turn, minibatch OT approximations can lead to biased solutions.

\textbf{Contributions}. We show that the above-mentioned issues can be eliminated. We do this by proposing a novel bridge matching-based approach to solve the SB in one iteration. 
\begin{enumerate}
  \item We propose a new bridge matching-based approach to solve the SB problem. Our approach exploits the novel "optimal" projection for stochastic processes that projects directly onto the set of SBs (\wasyparagraph\ref{sec:theory-optimal-sb-matching}).
  \item Based on the new theoretical results, we develop a new fast solver for the SB problem. We use the "light" parameterization for SBs \citep{korotin2024light} and our new theory on "optimal" projections to solve the SB problem in one bridge matching iteration (\wasyparagraph\ref{sec:light-optimization-procedure}).
  \item We perform extensive comparisons of this new solver on many setups where SB solvers are widely used, including the SB benchmark (\wasyparagraph{\ref{sec:exp-benchmark}}), single-cell data (\wasyparagraph{\ref{sec:exp-single-cell}}) and unpaired image translation (\wasyparagraph{\ref{sec:exp-image}}). 
\end{enumerate}
\textbf{Notations.} The notations of our paper mostly follow those used by the LightSB's authors in their work \citep{korotin2024light}. We work in $\mathbb{R}^{D}$, which is the $D$-dimensional Euclidean space equipped with the Euclidean norm $\|\cdot\|$. We use $\mathcal{P}(\mathbb{R}^{D})$ to denote the absolutely continuous Borel probability distributions whose variance and differential entropy are finite. To denote the density of $p\in\mathcal{P}(\mathbb{R}^{D})$ at a point $x\in\mathbb{R}^{D}$, we use $p(x)$. We use $\mathcal{N}(x|\mu,\Sigma)$ to denote the density at a point $x\in\mathbb{R}^{D}$ of the normal distribution with mean $\mu\in\mathbb{R}^{D}$ and covariance $0\prec \Sigma\in \mathbb{R}^{D\times D}$. We write $\KL{\cdot}{\cdot}$ to denote the Kullback-Leibler divergence between two distributions. In turn, $H(\cdot)$ denotes the differential entropy of a distribution. We use $\Omega$ to denote the space of trajectories, i.e., continuous $\mathbb{R}^{D}$-valued functions of $t\in [0,1]$. We write ${\mathcal{P}(\Omega)}$ to denote the probability distributions on the trajectories $\Omega$ whose marginals at $t=0$ and $t=1$ belong to $\mathcal{P}(\mathbb{R}^{D})$; this is the set of stochastic processes. We use $dW_{t}$ to denote the differential of the standard Wiener process $W\in\mathcal{P}(\Omega)$. For a process $T\in\mathcal{P}(\Omega)$, we denote its joint distribution at $t=0,1$ by $\pi^{T}\in\mathcal{P}(\mathbb{R}^{D}\times\mathbb{R}^{D})$. In turn, we use $T_{|x_0,x_1}$ to denote the distribution of $T$ for $t\in(0,1)$ conditioned on $T$'s values $x_0,x_1$ at $t=0,1$.

\vspace{-1mm}
\section{Preliminaries} \label{sec-background}
\vspace{-1mm}

We start with recalling the main concepts of the Schrödinger Bridge problem (\wasyparagraph\ref{sec-background-sb}). Next, we discuss the SB solvers which are the most relevant to our study (\wasyparagraph\ref{sec-background-lightsb}, \ref{sec-background-matching}).

\vspace{-1mm}
\subsection{Background on Schrödinger Bridges}\label{sec-background-sb}
\vspace{-1mm}

To begin with, we recall the SB problem with the Wiener prior and its equivalent Entropic Optimal Transport problem with the quadratic cost. We start from the latter as it is easier to introduce and interpret. For a detailed discussion of both these problems, we refer to \citep{leonard2013survey,chen2016relation}. Next, we describe the computational setup for learning SBs, which we consider in the paper.

\textbf{Entropic Optimal Transport (EOT) with the quadratic cost.} Consider distributions ${p_0\in\mathcal{P}(\mathbb{R}^{D})}$, ${p_1\in\mathcal{P}(\mathbb{R}^{D})}$. For $\epsilon>0$, the EOT problem with the quadratic cost is to find the minimizer of
\begin{equation}\label{entropic-ot}
\!\!\!\! \min_{\pi \in \Pi(p_0, p_1)} \int\limits_{\mathbb{R}^{D}} \! \int\limits_{\mathbb{R}^{D}} \! \frac{\|x_0-x_1\|^{2}}{2} \pi(x_0,x_1)dx_0dx_1 \! -\epsilon H(\pi), \!
\end{equation}
where $\Pi(p_0,p_1)$ is the set of the transport plans, i.e., probability distributions on $\mathbb{R}^{D}\times\mathbb{R}^{D}$ whose marginals are $p_0$ and $p_1$, respectively. The minimizer $\pi^{*}$ of \eqref{entropic-ot} exists, is unique, and is absolutely continuous; it is called the \textit{EOT plan}.


\textbf{Schrödinger Bridge with the Wiener Prior.} Consider the Wiener process $W^{\epsilon}\in\mathcal{P}(\Omega)$ with volatility $\epsilon>0$ which starts at $p_0$ at $t=0$. Its differential satisfies the stochastic differential equation (SDE): $dW^{\epsilon}_{t}=\sqrt{\epsilon}dW_{t}$. The SB problem with the Wiener prior $W^{\epsilon}$ between $p_0,p_1$ is
\begin{equation}\label{sb-wiener}
\min_{T \in \mathcal{F}(p_0, p_1)}\KL{T}{W^{\epsilon}},
\end{equation}
where $\mathcal{F}(p_0, p_1)\subset \mathcal{P}(\Omega)$ is the subset of stochastic processes which start at distribution $p_0$ (at $t=0$) and end at $p_1$ (at $t=1$). There exists a unique minimizer $T^{*}$. Furthermore, it is a diffusion process described by the SDE: $dx_{t}=g^{*}(x_t,t)dt+dW^{\epsilon}_{t}$ \citep[Prop. 2.3]{leonard2013survey}. The optimal process $T^{*}$ is called the Schrödinger Bridge and $g^{*}:\mathbb{R}^{D}\times [0,1]\rightarrow \mathbb{R}^{D}$ is the optimal drift.


\textbf{Relation of EOT and SB}. EOT \eqref{entropic-ot} and SB \eqref{sb-wiener} are closely related to each other. It holds that the joint marginal distribution $\pi^{T^{*}}$ of $T^{*}$ at times $0,1$ coincides with the EOT plan $\pi^{*}$ solving \eqref{entropic-ot}, i.e., $\pi^{T^{*}}=\pi^{*}$. Hence, the solution $\pi^{*}$ of the EOT problem \eqref{entropic-ot} can be recovered from $T^{*}$. Thus, SB can be viewed as a dynamic extension of EOT: a user is interested not only in the optimal mass transport plan $\pi^{*}$, but in the entire time-dependent mass transport process $T^{*}$.

Given just the optimal plan $\pi^{*}$, one may also complete it to get the full process $T^{*}$. It suffices to consider a process whose join marginal distribution at $t=0,1$ is $\pi^{*}$ and the trajectory distribution $T^{*}_{|x_0,x_1}$ at $t\in (0,1)$ conditioned on the ends $(x_0,x_1)$ coincides with the Wiener Prior's, i.e., ${T^{*}_{|x_0,x_1}=W_{|x_0,x_1}^{\epsilon}}$. The latter is known as the Brownian Bridge \citep[Sec. 8.3.3]{PINSKY2011391}. Thus, we obtain $T^* = \int_{\mathbb{R}^D\times\mathbb{R}^D} W^{\epsilon}_{|x_0, x_1} d\pi^*(x_0, x_1)$. This strategy does not directly give the optimal drift $g^{*}$, but it can recovered by other means, e.g., with the bridge matching (\wasyparagraph\ref{sec-background-matching}).

\textbf{Characterization for EOT and SB solutions.} It is known that the EOT plan $\pi^{*}$ can be represented through the input density $p_0$ and a function $v^{*}:\mathbb{R}^{D}\rightarrow\mathbb{R}_{+}$:
\begin{equation}\pi^{*}(x_0,x_1)=\underbrace{p_0(x_0)}_{=\pi^{*}(x_0)}\cdot \underbrace{\sfrac{\exp{\big(\sfrac{\langle x_0, x_1\rangle}{\epsilon}\big)v^{*}(x_1)}}{c_{v^{*}}(x_0)}}_{=\pi^{*}(x_1|x_0)},
\label{eot-plan-characterization}
\end{equation}
where $c_{v^{*}}(x_0)\defeq\int_{\mathbb{R}^{D}}\exp\big(\sfrac{\langle x_0,x_1\rangle}{\epsilon}\big)v^{*}(x_1)dy$. Following the notation of \cite{korotin2024light}, we call $v^{*}$ the adjusted Schrödinger potential. The optimal drift of $T^{*}$ can also be expressed using $v^{*}$. Namely,
\begin{eqnarray}
    g^*(x_t, t) =  \epsilon \nabla_{x_t} \log \Big(\int_{\mathbb{R}^{D}} \mathcal{N}(x'|x_t, (1-t)\epsilon I_{D})
    \nonumber
    \\
    \exp\big(\frac{\|x'\|^{2}}{2\epsilon}\big) v^*(x')dx'\Big) ,\label{sb-drift}\end{eqnarray}
see \citep[\wasyparagraph 2, 3]{korotin2024light} for a deeper discussion. Note that $v^{*}$ is defined up to the multiplicative constant.
 
\textbf{Computational SB/EOT setup}. In practice, distributions $p_0$ and $p_1$ are usually not available explicitly but only through their empirical samples ${\{x_0^{1},\dots,x_0^{N}\}\sim p_0}$ and ${\{x_{1}^{1},\dots,x_{1}^{M}\}\sim p_1}$. The typical task is to obtain a good approximation $\widehat{g}\approx g^{*}$ of the drift of SB process $T^{*}$ or explicitly/implicitly approximate the EOT plan's conditional distributions $\widehat{\pi}(\cdot|x_0)\approx \pi^{*}(\cdot|x_0)$ for \textit{all} $x_0\in\mathbb{R}^{D}$. This is needed to do the \textit{out-of-sample estimation}, i.e., for new (test) points $x_0^{new}\sim p_0$ sample ${x_1\sim \pi^{*}(\cdot|x_{0}^{new})}$ or simulate $T^{*}$'s trajectories staring at a point $x_0^{new}$ at time $t=0$. This setup widely appears in generative modeling \cite{de2021diffusion,gushchin2023entropic} and analysis of biological single cell data \cite{vargas2021solving,koshizuka2022neural,tong2023simulation}. 

The setup above is usually called the \textit{continuous} EOT or SB and should not be confused with the \textit{discrete} setup, which is widely studied in the discrete OT literature \citep{peyre2019computational,cuturi2013sinkhorn}. There one is mostly interested in computing the EOT plan directly between the empirical samples (probably weighted), i.e., match them with each other. There is usually no need in the out-of-sample estimation.

\vspace{-1mm}
\subsection{Energy-based EOT/SB Solvers} \label{sec-background-lightsb}
\vspace{-1mm}

Given a good approximation of the optimal potential $v^{*}$, one may approximate the conditional EOT plans and the optimal drift via \eqref{eot-plan-characterization} and \eqref{sb-drift}, respectively (using $v^{*}$'s approximation). Inspired by the idea above, papers \citep{korotin2024light,mokrov2024energyguided} provide related approaches to learn this potential. They show that $v^{*}$ can be learned via solving
\begin{eqnarray}
\mathcal{L}_{0}(v)\defeq \min_{v} \Big\{\int_{\mathbb{R}^{D}}\log c_{v}(x_0)p_0(x_0)dx_0
\nonumber
\\
 -\int_{\mathbb{R}^{D}}\log v(x_1)p_1(x_1)dy \Big\},
\label{light-sb-objective}
\end{eqnarray}
where $c_{v}(x)\defeq\int_{\mathbb{R}^{D}}\exp\big(\sfrac{\langle x,y\rangle}{\epsilon}\big)v(y)dy$. This objective magically turns to be equal up to an additive $v$-independent constant to $\KL{\pi^{*}}{\pi_{v}}=\KL{T^{*}}{S_{v}}$, where
\begin{equation}
\pi_{v}(x_0,x_1) \defeq p_0(x_0)\underbrace{\frac{\exp\big(\sfrac{\langle x_0,x_1\rangle}{\epsilon}\big)v(x_1)}{c_{v}(x_0)}}_{=\pi_{v}(x_1|x_0)},
\label{plan-parametric-full}
\end{equation}
is an approximation of the optimal plan constructed by $v$ instead of $v^{*}$ in \eqref{eot-plan-characterization}. In turn, $S_{v}\in\mathcal{P}(\Omega)$ is a process with joint marginal (at $t=0,1$) is $\pi_{v}$ and $S_{|x_0,x_1}=W^\epsilon_{|x_0,x_1}$. Its drift $g_{v}$ can be recovered by using \eqref{sb-drift} with $v$ instead of $v^{*}$. 

Here we use the letter $S$ instead of $T$ to denote the process, and this is for a reason. With mild assumptions on $v$, the process $S_{v}$ is the Schrödinger bridge between $p_0$ and $p_{v}(x_1)\defeq \int_{\mathbb{R}^{D}}\pi_{v}(x_0,x_1)dx_0$, i.e., its marginal at $t=1$. This follows from the EOT benchmark constructor theorem \citep[Theorem 3.2]{gushchin2023building}. Hence, minimization \eqref{light-sb-objective} can be viewed as the optimization over processes $S_v$, which are SBs determined by their potential $v$.

Unfortunately, the optimization of \eqref{light-sb-objective} is tricky. While the potential $v$ can be directly parameterized, e.g., with a neural network $v_{\theta}$, the key challenge is to compute $c_{v}$, which is a non-trivial integral. Note that due to \eqref{eot-plan-characterization}, one has ${\pi^{*}(x_1|x_0\!=\!0)\propto v^{*}(y)}$, i.e., $v^{*}$ is an unnormalized density of some distribution. This fact is exploited in \cite{mokrov2024energyguided,korotin2024light} to establish ways to optimize \eqref{light-sb-objective}.

\textbf{Energy-guided EOT solver (EgNOT).} In \citep{mokrov2024energyguided}, the authors find out that, informally, objective \eqref{light-sb-objective} aims to find an unnormalized density $v^{*}$ by optimizing KL divergence. Therefore, it resembles the objectives of \textit{Energy-based Models} \citep[EBM]{lecun2006tutorial}. Inspired by this discovery, the authors show how the standard EBM approaches can be modified to optimize \eqref{light-sb-objective} and later sample from the learned plan $\pi_{v}$. The limitation of the approach is the necessity to use time-consuming MCMC techniques.

\textbf{Light Schrödinger Bridge solver (LightSB).} In \cite{korotin2024light}, they use the fact from \citep{gushchin2023building} that the Gaussian parameterization
\begin{equation}
    v_{\theta}(x_1)=\sum_{k=1}^{K}\alpha_{k}\mathcal{N}(x_1|\mu_{k},\epsilon \Sigma_{k})
    \label{gaussian-v}
\end{equation}
for $v$ provides a closed form analytic expression for $c_{\theta}$. This removes the necessity to use time-consuming MCMC approaches at both the training and the inference. Furthermore, Gaussian parameterization provides \textit{the closed form} expression for the drift of $S_{v}$  and allows lightspeed sampling from conditional distributions $\pi_{v}(x_{1}|x_{0})$, see \citep[Propositions 3.2, 3.3]{korotin2024light}.

\vspace{-1mm}
\subsection{Bridge matching Procedures for EOT/SB} \label{sec-background-matching}
\vspace{-1mm}

\textbf{Recovering SB process from EOT plan (OT-CFM).}
Since every SB solution is given by the EOT plan $\pi^*$ and the Brownian Bridges $W^{\epsilon}_{|x_0,x_1}$, i.e., ${T^{*} = \int_{\mathbb{R}^{D} \times \mathbb{R}^{D}} W^{\epsilon}_{|x_0,x_1} d\pi^*(x_0,x_1)}$, solution of the EOT problem $\pi^*$ already provides a way to sample from marginal distributions $p_{T^{*}}(x_t, t)$ of $T^{*}$ at each time $t\in [0,1]$. The authors of \citep{tong2023simulation} propose to use this property to recover the drift $g^*(x_t, t)$ of the process $T^{*}$ using flow \cite{lipman2022flow} and score matching techniques. They use the flow matching to fit the drift $g^{\circ}(x_t, t)$ of the probability flow ODE for marginals $p_{T^{*}}(x_t, t)$ (at time $t$) of the process $T^{*}$, i.e., $g^{\circ}(x_t, t)$ for which the continuity equation ${\frac{\partial p_{T^{*}}(x_t, t)}{\partial t} = -\nabla \cdot (p_{T^{*}}(x_t, t)g^{\circ}(x_t, t))}$ holds. In turn, score matching is used to fit the score functions $\nabla \log p_{T^{*}}(x_t, t)$ of marginal distributions. Then they recover the Schrödinger bridge drift by using the relationship between the probability flow ODE and the SDE representation of stochastic processes: $g^{\circ}(x_t, t) + \frac{\epsilon}{2}\nabla \log p_{T^{*}}(x_t, t) = g^*(x_t, t)$.

Unfortunately, the solution of the EOT problem $\pi^*$ for two arbitrary distributions $p_0$ and $p_1$ is unknown. The authors use the discrete (minibatch) OT between empirical distributions $\widehat{p}_0\defeq \sum_{n=1}^{N}\delta_{x_{n}}$ and $\widehat{p}_1\defeq \sum_{m=1}^{M}\delta_{y_{m}}$ constructed by available samples instead. However, the empirical EOT plan $\widehat{\pi}$ may be highly biased from the true $\pi^{*}$. This potentially leads to undesirable errors in approximating SB.

\textbf{Learning SB process without EOT solution (DSBM).}
Another matching method has been proposed by \citep{shi2023diffusion} to get the SB without knowing the EOT plan $\pi^*$. To begin with, for any $\pi\in\Pi(p_0,p_1)$, define $T_{\pi}$ (called the \textit{reciprocal} process of $\pi$) as a mixture of Brownian Bridges with weights given by $\pi$, i.e., $T_{\pi} = \int_{\mathbb{R}^{D} \times \mathbb{R}^{D}} W^{\epsilon}_{|x_0,x_1} d\pi(x_0,x_1)$.

To get $\pi^{*}$ and $T^{*}$, the authors alternate between two projections of stochastic processes: the \textit{reciprocal} and the \textit{Markovian}. For a process $T\in\mathcal{P}(\Omega)$, its reciprocal projection is a mixture of Brownian bridges given by the plan $\pi^{T}$:
\begin{equation}
\text{proj}_{\mathcal{R}}(T) \defeq \int_{\mathbb{R}^{D} \times \mathbb{R}^{D}} W^{\epsilon}_{|x_0,x_1} d\pi^{T}(x,y).
\label{eq:reciprocal-proj}
\end{equation}
This is a reciprocal process with the same joint marginal $\pi^{T}$ at times $t=0,1$ as $T$  (one may write $\text{proj}_{\mathcal{R}}(T)=T_{\pi^{T}}$).

Consider any reciprocal process $T_{\pi}$. Its Markovian projection $\text{proj}_{\mathcal{M}}(T_{\pi})$ is a diffusion process defined by an SDE $dx_t = g(x_t, t)dt + \sqrt{\epsilon}dW_t$, that preserves all time marginals of $T_{\pi}$. Its drift function is analytically given by: 
\begin{gather}\label{eq:markovian-proj-drift}
    g(x_t, t) = \int_{\mathbb{R}^D} \frac{x_1 - x_t}{1-t} p_{T_{\pi}}(x_1|x_t) dx_1,
\end{gather}
where $p_{T_{\pi}}$ denotes the distribution of $T_{\pi}$. Drift \eqref{eq:markovian-proj-drift} is a solution to the following optimization problem:
\begin{gather}
    \!\!\!\!\! \min_{g} \int_{0}^1 \!\!\! \int_{\mathbb{R}^D \times \mathbb{R}^D} \!\! || g(x_t, t) - \frac{x_1 - x_t}{1-t}||^2 d p_{T_{\pi}}(x_t, x_1)dt
    \label{eq:optimal_proj_loss}
\end{gather}
and can be learned by sampling $(x_0, x_1) \sim \pi$, ${x_t \sim W^{\epsilon}_{|x_0, x_1}}$ and parametrizing $g$ by a neural network. This procedure is the so-called \textbf{bridge matching} procedure.

The authors prove \citep[Theorem 8]{shi2023diffusion} that a sequence $(T^{l})_{l \in \mathbb{N}}$ constructed by alternating  the projections
\begin{gather}
    T^{2l + 1} = \text{proj}_{\mathcal{M}}(T^{2l + 2}), \quad T^{2l} = \text{proj}_{\mathcal{R}}(T^{2l + 1}),
\end{gather}
with $T^0 = T_{\pi}$ and any $\pi \in \Pi(p_0, p_1)$ converges to the SB solution $T^{*}$ between $p_0$ and $p_1$. When $\epsilon \!\rightarrow\! 0$, the Markovian projection transforms into the well-known flow matching procedure \citep{lipman2022flow}, 
and the whole iterative procedure becomes the Rectified Flow \citep{liu2022flow}. 


Markovian projection \eqref{eq:markovian-proj-drift} is the bottleneck of the iterative procedure. In practice, the method uses a neural net to learn the drift of the projection. This introduces approximation errors at each iteration. The errors lead to differences between the process $T^n$'s marginal distribution at time $t=1$ and the actual $p_1$. These errors accumulate after each iteration and affect convergence, motivating the search for a bridge matching procedure that converges in a single iteration.

\vspace{-1mm}
\section{Light and Optimal SB Matching Solver}
\vspace{-1mm}

In \wasyparagraph\ref{sec:theory-optimal-sb-matching}, we present the main theoretical development of our paper -- the optimal Schrödinger bridge matching method. Next, in \wasyparagraph\ref{sec:light-optimization-procedure}, we propose our novel \textbf{LightSB-M} solver, which implements the method in practice. In \wasyparagraph\ref{sec-closely-related}, we discuss its connections with the related EOT/SB solvers. In Appendix~\ref{app:proofs} we \underline{provide proofs} of all theorems.

\vspace{-1mm}
\subsection{Theory. Optimal Schrödinger Bridge Matching}\label{sec:theory-optimal-sb-matching}
\vspace{-1mm}

Our algorithm is based on the properties of KL projections of stochastic processes on the set $\mathcal{S}$ of Schrödinger Bridges:
\begin{gather}
\mathcal{S}\defeq\big\{S\in \mathcal{P}(\Omega)\text{ such that } \exists p_0^{S},p_1^{S}\in\mathcal{P}(\mathbb{R}^{D})
\nonumber
\\
\text{for which }S=\argmin_{T\in\mathcal{F}(p_0^{S},p_1^{S})}\KL{T}{W^{\epsilon}}\big\}.
\end{gather}

In addition to reciprocal and Markovian projections, we define a new "optimal projection" (OP). Consider \textbf{any} plan $\pi\in\Pi(p_0,p_1)$, e.g., independent, minibatch, optimal, etc. Given a \textbf{reciprocal} process $T_{\pi}$, its projection is the process
\begin{equation}
    \text{proj}_{\mathcal{S}}(T_{\pi}) \defeq \argmin_{S\in\mathcal{S}} \KL{T_{\pi}}{S}.
    \label{eq:optimal-projection}
\end{equation}

We prove that optimal projection allows to obtain the solution of Schrödinger Bridge in just one projection step.
\begin{theorem}[OP of a reciprocal process]\label{thm:optimal-projection}
The optimal projection of a reciprocal process $T_{\pi}$, given by a joint distribution $\pi \in \Pi(p_0, p_1)$ leads to the Schrödinger Bridge $T^{*}$ between the distributions $p_0$ and $p_1$, i.e.:
\begin{equation}\label{eq:optimal-projection-property}
    \text{proj}_{\mathcal{S}}(T_{\pi}) =  \argmin_{S\in\mathcal{S}} \KL{T_{\pi}}{S} =T^{*}.
\end{equation}
\end{theorem}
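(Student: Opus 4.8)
The plan is to exploit the additivity (chain rule) of the KL divergence for stochastic processes together with the characterization of $\mathcal{S}$ as consisting of diffusions whose conditional trajectory law between fixed endpoints is the Brownian bridge $W^{\epsilon}_{|x_0,x_1}$. The key structural fact is that for \emph{any} $S\in\mathcal{S}$, $S$ is itself a mixture of Brownian bridges over its own endpoint plan $\pi^{S}$ (this is exactly the reciprocal-plus-Markovian characterization recalled in the preliminaries: a Schrödinger bridge with the Wiener prior is reciprocal, so $S=\int W^{\epsilon}_{|x_0,x_1}\,d\pi^{S}(x_0,x_1)$). Since the candidate $T_{\pi}$ in the objective is also a mixture of Brownian bridges, $T_{\pi}=\int W^{\epsilon}_{|x_0,x_1}\,d\pi(x_0,x_1)$, both measures share the same conditional law $W^{\epsilon}_{|x_0,x_1}$ on the interior $t\in(0,1)$ given the endpoints. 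The first and main step is therefore to decompose
\begin{equation}
\KL{T_{\pi}}{S}=\KL{\pi^{T_{\pi}}}{\pi^{S}}+\int \KL{(T_{\pi})_{|x_0,x_1}}{S_{|x_0,x_1}}\,d\pi(x_0,x_1),
\label{eq:kl-chain}
\end{equation}
where the disintegration is with respect to the endpoint values at $t=0,1$.

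The second step is to observe that the interior conditional term vanishes: because $T_{\pi}$ is a mixture of Brownian bridges we have $(T_{\pi})_{|x_0,x_1}=W^{\epsilon}_{|x_0,x_1}$, and because $S\in\mathcal{S}$ is a Schrödinger bridge (hence reciprocal) we likewise have $S_{|x_0,x_1}=W^{\epsilon}_{|x_0,x_1}$. Thus $\KL{(T_{\pi})_{|x_0,x_1}}{S_{|x_0,x_1}}=\KL{W^{\epsilon}_{|x_0,x_1}}{W^{\epsilon}_{|x_0,x_1}}=0$ for $\pi$-a.e. $(x_0,x_1)$, so \eqref{eq:kl-chain} collapses to $\KL{T_{\pi}}{S}=\KL{\pi}{\pi^{S}}$, using $\pi^{T_{\pi}}=\pi$. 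Consequently the interior dynamics drop out entirely and the minimization over $S\in\mathcal{S}$ reduces to a minimization over the endpoint plans $\pi^{S}$ that are attainable as marginals of Schrödinger bridges. The third step is to argue that this reduced problem is minimized precisely when $\pi^{S}=\pi$, i.e. by the Schrödinger bridge whose endpoint marginal matches $\pi$ exactly. Since $\KL{\pi}{\pi^{S}}\ge 0$ with equality iff $\pi^{S}=\pi$, it suffices to show that $\pi$ is itself realizable as the endpoint plan of some $S\in\mathcal{S}$; but this is immediate, as $T_{\pi}$ has marginals $p_0,p_1$ and the Schrödinger bridge $T^{*}$ between $p_0$ and $p_1$ has endpoint plan equal to the EOT plan $\pi^{*}$ --- so I must be careful here, because in general $\pi\ne\pi^{*}$.

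This last point is where the main obstacle lies, and it forces a correction to the naive reading above. The map $S\mapsto\pi^{S}$ is \emph{not} surjective onto $\Pi(p_0,p_1)$: the endpoint plan of a Schrödinger bridge must have the Gibbs/EOT form \eqref{eot-plan-characterization}, so $\pi$ itself need not be attainable. Hence the genuine content is that minimizing $\KL{\pi}{\pi^{S}}$ over the set of \emph{EOT-form} plans $\pi^{S}$ (ranging over all pairs $p_0^{S},p_1^{S}$) recovers the EOT plan $\pi^{*}$ between the fixed marginals $p_0,p_1$, and thereby the bridge $T^{*}$. The plan for this step is to invoke the characterization of EOT/SB solutions from the preliminaries --- namely the static EOT variational problem \eqref{entropic-ot} and its equivalence with \eqref{sb-wiener}, together with the $\mathcal{L}_0$ objective \eqref{light-sb-objective} whose minimizer is shown to equal $\KL{\pi^{*}}{\pi_v}$ up to a constant. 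I expect the cleanest route is to show that $\min_{S\in\mathcal{S}}\KL{\pi}{\pi^{S}}$, when restricted to Schrödinger bridges sharing the first marginal $p_0$, is equivalent (up to $v$-independent terms not involving $\pi$'s interior) to the entropic transport objective, whose unique minimizer over the relevant feasible set is $\pi^{*}$; the uniqueness and existence of $\pi^{*}$ stated after \eqref{entropic-ot} then pin down $\text{proj}_{\mathcal{S}}(T_{\pi})=T^{*}$. The delicate part is making precise which marginals are held fixed in the inner minimization and confirming that the optimal $\pi^{S}$ has first marginal $p_0$ and second marginal $p_1$, so that the recovered bridge is indeed the one between the original $p_0$ and $p_1$ rather than between some shifted pair.
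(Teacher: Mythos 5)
Your first two steps coincide exactly with the paper's own proof: disintegrate the KL divergence at the endpoints, note that both $T_{\pi}$ and any $S\in\mathcal{S}$ are mixtures of Brownian bridges so the interior conditional KL vanishes, and conclude $\KL{T_{\pi}}{S}=\KL{\pi}{\pi^{S}}$. You also correctly spot the obstacle that breaks the naive argument: $\pi$ is generally not of Gibbs/EOT form, so it is not attainable as some $\pi^{S}$. But your third step --- which is the actual content of the theorem --- is left as a plan (``I expect the cleanest route is\dots'', ``the delicate part is\dots'') rather than an argument, and the decisive computation is missing. What the paper proves, and what your sketch needs, is the identity
\[
\KL{\pi}{\pi^{S}} \;=\; \widehat{C}(\pi) + \KL{\pi^{*}}{\pi^{S}}
\qquad\text{for \emph{every} } S\in\mathcal{S},
\]
with $\widehat{C}(\pi)$ independent of $S$. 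This follows by expanding the Gibbs form \eqref{plan-parametric-full} of $\pi^{S}$, i.e.\ $\log \pi^{S}(x_0,x_1)=\log p_0^{S}(x_0)-\log c_{v^{S}}(x_0)+\sfrac{\langle x_0,x_1\rangle}{\epsilon}+\log v^{S}(x_1)$: every term is a function of $x_0$ alone or of $x_1$ alone except the bilinear one, so its integral against $\pi$ equals its integral against $\pi^{*}$ (the two plans share the marginals $p_0,p_1$), while the bilinear term contributes only the $S$-independent constant $\frac{1}{\epsilon}\int \langle x_0,x_1\rangle\,(d\pi-d\pi^{*})$. Given the identity, $\KL{\pi^{*}}{\pi^{S}}\ge 0$ with equality iff $\pi^{S}=\pi^{*}$, which is attained by $S=T^{*}\in\mathcal{S}$; since a reciprocal process is determined by its endpoint plan, the minimizer is unique and equals $T^{*}$.

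Note that this single identity also dissolves the ``delicate part'' you flag at the end. The theorem minimizes over all of $\mathcal{S}$, including bridges between shifted marginal pairs $(p_0^{S},p_1^{S})\neq(p_0,p_1)$, and no separate argument is needed to exclude them: $\log p_0^{S}(x_0)-\log c_{v^{S}}(x_0)$ is a function of $x_0$ alone \emph{whatever} $p_0^{S}$ is, so such bridges are covered by the same computation and are automatically suboptimal. Your alternative route through $\mathcal{L}_0$ \eqref{light-sb-objective} does work for the restricted class $\mathcal{S}(p_0)$ --- indeed $\KL{\pi}{\pi_{v}}=\mathcal{L}_0(v)+\text{const}(\pi)$ by the very same expansion, and the preliminaries record $\mathcal{L}_0(v)=\KL{\pi^{*}}{\pi_{v}}+\text{const}$ --- but as written it would still leave open exactly the case of bridges whose marginal at $t=0$ differs from $p_0$, so it cannot by itself establish \eqref{eq:optimal-projection-property} as stated.
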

To implement this in practice, we need to \textbf{(a)} have a tractable estimator of $\KL{T_{\pi}}{S}$ and \textbf{(b)} be able to optimize over $\mathcal{S}$. We denote $\mathcal{S}(p_0)$ as the subset of $\mathcal{S}$ of processes which start at $p_0$ at $t=0$. Since $T^{*}\in \mathcal{S}(p_0)$, it suffices to optimize over $\mathcal{S}(p_0)$ in \eqref{eq:optimal-projection-property}. As it was noted in the background \wasyparagraph\ref{sec-background-lightsb}, processes $S\in\mathcal{S}(p_0)$ are determined by their adjusted Schrödinger potential $v$. We will write $S_{v}$ instead of $S$ for convenience.

\begin{theorem}[Tractable objective for the OP] \label{thm:tractable-objective-of-optimal-projection}
For the SB $S_v\in\mathcal{S}(p_0)$ and a reciprocal process $T_\pi$ with $\pi \in \Pi(p_0, p_1)$ the optimal projection objective \eqref{eq:optimal-projection} is
\begin{gather}
    \KL{T_{\pi}}{S_v} = C(\pi) + \label{eq:mse-optimal-projection}
    \\
    \frac{1}{2\epsilon}\int_{0}^{1}\!\! \int_{\mathbb{R}^D \times \mathbb{R}^D} ||g_v(x_t, t) -  \frac{x_{1} - x_{t}}{1 - t}||^2 dp_{T_{\pi}}(x_t, x_1) dt, 
    \nonumber
\end{gather}
where $g_{v}$ is the drift of $S_{v}$ given by \eqref{sb-drift} (with $v$ instead of $v^{*}$). Here the constant $C(\pi)$ does not depend on $S_{v}$.
\end{theorem}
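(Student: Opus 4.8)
The plan is to compute $\KL{T_{\pi}}{S_v}$ with Girsanov's theorem and then split the resulting mean-squared drift mismatch into a $v$-dependent bridge-matching term and a $v$-independent remainder that becomes $C(\pi)$.

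First I would observe that both $T_{\pi}$ and $S_v$ start from the same marginal $p_0$ at $t=0$ (the former because $\pi\in\Pi(p_0,p_1)$, the latter because $S_v\in\mathcal{S}(p_0)$) and share the same diffusion coefficient $\sqrt{\epsilon}$. Hence their initial laws contribute nothing to the divergence and the two measures may be compared purely through their drifts. Writing $b_{T_\pi}(\omega,t)$ for the drift in the semimartingale decomposition of the reciprocal process $T_\pi$ and $g_v(x_t,t)$ for the drift of $S_v$, the standard path-space KL formula (common volatility) gives $\KL{T_{\pi}}{S_v} = \frac{1}{2\epsilon}\int_0^1 \mathbb{E}_{T_\pi}\|b_{T_\pi}(\omega,t) - g_v(x_t,t)\|^2\,dt$.

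Second I would identify $b_{T_\pi}$. Conditioned on its endpoints $(x_0,x_1)$, $T_\pi$ is the Brownian bridge $W^{\epsilon}_{|x_0,x_1}$, whose drift is $\frac{x_1-x_t}{1-t}$; the drift of the mixture in its own filtration is therefore the posterior mean $b_{T_\pi}(\omega,t)=\mathbb{E}_{T_\pi}[\frac{x_1-x_t}{1-t}\mid x_{[0,t]}]$. Introducing the conditional mean given only the current state, $\bar b(x_t,t)\defeq\mathbb{E}_{T_\pi}[\frac{x_1-x_t}{1-t}\mid x_t]$ (which is exactly the Markovian-projection drift \eqref{eq:markovian-proj-drift}), I would apply the tower property twice. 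Since $g_v(x_t,t)$ and $\bar b(x_t,t)$ are $\sigma(x_t)$-measurable, the bias--variance identity $\mathbb{E}_{T_\pi}\|b_{T_\pi}-g_v\|^2 = \mathbb{E}_{T_\pi}\|b_{T_\pi}-\bar b\|^2 + \mathbb{E}_{T_\pi}\|\bar b - g_v\|^2$ holds with a vanishing cross term, and its first summand is $v$-independent. A second application, $\mathbb{E}_{T_\pi}\|\frac{x_1-x_t}{1-t}-g_v\|^2 = \mathbb{E}_{T_\pi}\|\frac{x_1-x_t}{1-t}-\bar b\|^2 + \mathbb{E}_{T_\pi}\|\bar b - g_v\|^2$, lets me rewrite $\mathbb{E}_{T_\pi}\|\bar b - g_v\|^2$ as $\mathbb{E}_{T_\pi}\|\frac{x_1-x_t}{1-t}-g_v\|^2$ minus a further $v$-independent term. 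Collecting all $v$-independent pieces into $C(\pi)$ and noting that the surviving integrand depends only on $(x_t,x_1)$, so the path expectation reduces to integration against the joint marginal $p_{T_\pi}(x_t,x_1)$, yields exactly \eqref{eq:mse-optimal-projection}.

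I expect the main obstacle to be the rigorous justification of the Girsanov step for the \emph{non-Markov} reciprocal process $T_\pi$: one must verify mutual absolute continuity of $T_\pi$ and $S_v$, the Novikov-type integrability of the drifts, and that the semimartingale drift of a mixture of Brownian bridges is the stated posterior mean. A clean way to circumvent the path-dependent drift $b_{T_\pi}$ is to invoke the Pythagorean identity for the Markovian projection, $\KL{T_{\pi}}{S_v}=\KL{T_{\pi}}{\text{proj}_{\mathcal{M}}(T_\pi)}+\KL{\text{proj}_{\mathcal{M}}(T_\pi)}{S_v}$: the first term is the $v$-independent constant, while the second is a KL between two genuine diffusions (drifts $\bar b$ and $g_v$, same volatility, same start $p_0$) to which Girsanov applies directly, leaving only the single variance decomposition that converts $\|\bar b - g_v\|^2$ into $\|\frac{x_1-x_t}{1-t}-g_v\|^2$.
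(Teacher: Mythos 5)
Your proposal is correct and, in its recommended ``clean way,'' coincides exactly with the paper's proof: the Pythagorean identity for the Markovian projection \citep[Lemma 6]{shi2023diffusion}, the Girsanov-type KL formula between same-volatility diffusions started at $p_0$ \citep{pavon1991free}, and the tower-property (bias--variance) step that converts $\|g_{\mathcal{M}}-g_v\|^2$ integrated against $p_{T_\pi}(x_t)$ into $\|\frac{x_1-x_t}{1-t}-g_v\|^2$ integrated against $p_{T_\pi}(x_t,x_1)$ plus a $v$-independent constant. The only difference is cosmetic: the paper carries out the decomposition by expanding the square and completing it back, whereas you phrase it as two orthogonality identities, and your initial direct-Girsanov sketch on the non-Markov process $T_\pi$ is superseded by the route you (and the paper) ultimately adopt.
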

This result provides an opportunity to optimize $S_{v}$ via fitting its drift $g_{v}$. Indeed, we can estimate $\KL{T_{\pi}}{S_{v}}$ up to a constant by sampling from $T_{\pi}$. To sample from $T_{\pi}$, it is sufficient to sample a pair $(x_0, x_1)\sim \pi$ and then to sample $x_t$ from the Brownian bridge $W^{\epsilon}_{|x_0, x_1}$. The natural remaining question is how to parameterize the drifts of the SB processes $S_{v}\in\mathcal{S}$. We explain this in the section below.

\vspace{-1mm}
\subsection{Practice. LightSB-M Optimization Procedure} \label{sec:light-optimization-procedure}
\vspace{-1mm}
To solve the Schrödinger Bridge between two distributions $p_0$ and $p_1$ by using optimal projection \eqref{eq:optimal-projection} and its tractable objective \eqref{eq:mse-optimal-projection}, we use \textbf{any} plan $\pi \in \Pi(p_0, p_1)$ accessible by samples. It can be the independent plan, i.e., just independent samples from $p_0, p_1$, any minibatch OT plan, i.e., the one obtained by solving discrete OT on minibatch from $p_0$ and $p_1$, etc. To optimize over Schrödinger Bridges $S_{v} \in \mathcal{S}$, we use the parametrization of $v$ as a Gaussian mixture \eqref{gaussian-v} from LightSB (\wasyparagraph{\ref{sec-background-lightsb}}), which for every $v_{\theta}$ provides $g_{\theta}\defeq g_{v_\theta}$ \eqref{sb-drift} in a closed form \citep[Proposition 3.3]{korotin2024light}:
\begin{gather}
    g_{\theta}(x, t) = \epsilon \nabla_{x} \log \big(\mathcal{N}(x|0, \epsilon(1-t)I_{D})
    \nonumber
    \\
    \sum_{k=1}^{K} \big\{\alpha_{k}\mathcal{N}(r_k|0, \epsilon \Sigma_k)\mathcal{N}(h(x, t)|0, A_k^t)\big\}\big)
    \label{sb-drift-gaussian}
\end{gather}
with $A_k^t \! \defeq \! \frac{t}{\epsilon(1-t)}I_{D} \! + \! \frac{\Sigma_k^{-1}}{\epsilon}$ and ${h_k(x, t) \! \defeq \! \frac{x}{\epsilon(1-t)} \! + \! \frac{1}{\epsilon}\Sigma_k^{-1}r_k}$. Using this parametrization and \textbf{any} $\pi \in \Pi(p_0, p_1)$, we optimize objective \eqref{eq:mse-optimal-projection} with the stochastic gradient descent.

\vspace{-1mm}
\begin{algorithm}[H]
    \caption{Light SB Matching (LightSB-M)}
    \label{alg:light-sbm}
    \SetKwInOut{Input}{Input}\SetKwInOut{Output}{Output}
    \Input{plan $\pi \in \Pi(p_0, p_1)$ accessible by samples; 
    adjusted Schrödinger potential $v_{\theta}$ parametrized by a gaussian mixture ($\theta = \{\alpha_k, \mu_k, \Sigma_k\}_{k=1}^K$).}
    \Output{learned drift $g_{\theta}$ approximating the optimal $g^{*}$.}
    
    \Repeat{converged}{
        Sample batch of pairs $\{x^n_0, x^n_1\}_{n=0}^N \sim \pi$; \\
        Sample batch $\{t_n\}_{n=0}^N \sim U[0, 1]$; \\
        Sample batch $\{x^n_t\}_{n=0}^N \sim W^{\epsilon}_{|x_0,x_1}$; \\
        $\mathcal{L}_{\theta} \leftarrow \frac{1}{N}\sum_{n=1}^N ||g_{\theta}(x^n_t, t_n) - \frac{1}{1-t_n}(x^n_1 - x^n_t)||^2$;\\
        Update $\theta$ using $\frac{\partial \mathcal{L}_{\theta}}{\partial \theta}$;
    }
\end{algorithm}
\vspace{-1mm}

The \textbf{training} procedure is described in Algorithm~\ref{alg:light-sbm}. We recall that the Brownian bridge $W^{\epsilon}_{|x_0,x_1}$ has time marginals $p_{BB}(x_t|x_0, x_1) \defeq \mathcal{N}(x_t|tx_1 + (1-t)x_0, \epsilon t(1-t)I_{D})$, i.e. has a normal distribution with a scalar covariance matrix.

After learning the drift $g_{v}(x, t)$ of the Schrödinger Bridge SDE $dx_t = g_{v}(x_t,t)dt + \sqrt{\epsilon}dW_t$, one can use any SDE solver to \textbf{infer} trajectories. For example, one can use the simplest and most popular Euler-Maruyama scheme \citep[\wasyparagraph{9.2}]{kloeden1992stochastic}. However, SDE solvers introduce some errors due to discrete approximations. Using the LightSB parameterization of Schrödinger bridges from \citep{korotin2024light}, we can sample trajectories without having to solve the learned SDE numerically. To do so, we first sample from the learned plan $\pi_{v}(x_1|x_0)$ given by \eqref{plan-parametric-full} and then sample the trajectory of the Brownian bridge $W^{\epsilon}_{|x_0,x_1}$ using it's self-similarity property \citep[\wasyparagraph{3.2}]{korotin2024light}. We recall that the self-similarity of the Brownian bridge means that if we have a trajectory $x_0, x_{t_1}, ..., x_{t_{L}}, x_1$, we can sample a new point at time $t_{l} < t < t_{l+1}$ by using the following property of the Brownian bridge:
\begin{eqnarray}
    x_{t}\!\sim\!\mathcal{N}\big(x_{t}|x_{t_l}\!\!+\!\frac{t'-t_{l}}{t_{l+1}\!\!-t_{l}}(x_{t_{l+1}}\!\!-x_{t_{l}}),\!\epsilon \frac{(t'\!-\!t_{l})(t_{l+1}\!-\!t')}{t_{l+1}-t_l}\big).
    \nonumber
\end{eqnarray}

\vspace{-1mm}
\subsection{Connections to the Most Related Prior Works} \label{sec-closely-related}
\vspace{-1mm}

\textbf{DSBM} \citep{shi2023diffusion}. Schrödinger Bridge $T^*$ between $p_0$ and $p_1$ is the only process that simultaneously is Markovian and reciprocal \citep[Proposition 2.3]{leonard2013survey}. This fact lies at the core of DSBM's iterative approach of alternating Markovian and reciprocal projections. In turn, our optimal projection \eqref{eq:optimal-projection} provides the SB in \textbf{one step}, projecting a process on the set of processes that are both reciprocal and Markovian, i.e., Schrödinger Bridges.

\textbf{OT-CFM} \citep{tong2023simulation}. Our optimal projection \eqref{eq:optimal-projection} of a reciprocal process of $T_{\pi}$ with \textbf{any} $\pi \in \Pi(p_0, p_1)$ is the same Schrödinger Bridge between $p_0$ and $p_1$. Thus, optimal projection does not depend on the choice of the plan $\pi$. In turn, OT-CFM provides theoretical guarantees of finding the Schrödinger Bridge only if one chooses as plan $\pi$ the EOT plan $\pi^*$, which is unknown for arbitrary distributions $p_0, p_1$. 
\textbf{EgNOT/LightSB} \citep{mokrov2024energyguided,korotin2024light}. Our main objective \eqref{eq:optimal-projection} resembles objective \eqref{light-sb-objective} of EgNOT and LightSB as the latter equals  $\KL{T^*}{S_v}$ up to a constant. At the same time, our objective allows to use \textbf{any} reciprocal process $T_{\pi}$ instead of $T^*=T_{\pi^{*}}$.
Interestingly, our obtained tractable bridge matching objective turns out to be closely related to the EgNOT/LightSB objective \eqref{light-sb-objective}.
\begin{theorem}[Equivalence to EgNOT/LightSB objective] \label{thm:eqviv-lightsb}
The OP objective \eqref{eq:mse-optimal-projection} for a reciprocal process $T_{\pi}$ and $\pi \in \Pi(p_0, p_1)$ is equivalent to LightSB objective $\mathcal{L}_0$ \eqref{light-sb-objective}:
\begin{gather}
    \frac{1}{2\epsilon}\int_{0}^{1}\!\! \int_{\mathbb{R}^D \times \mathbb{R}^D} ||g_v(x_t, t) -  \frac{x_{1} - x_{t}}{1 - t}||^2 dp_{T_{\pi}}(x_t, x_1) dt = 
    \nonumber
    \\
    \widetilde{C}(\pi) + \mathcal{L}_0(v).
    \nonumber
\end{gather}
\end{theorem}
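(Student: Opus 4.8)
The plan is to route the claimed identity through the path-space divergence $\KL{T_{\pi}}{S_v}$, which Theorem~\ref{thm:tractable-objective-of-optimal-projection} already identifies with the bridge-matching integral up to the $v$-independent constant $C(\pi)$. Thus it suffices to show that $\KL{T_{\pi}}{S_v}$ itself equals $\mathcal{L}_0(v)$ plus a quantity depending only on $\pi$; absorbing $C(\pi)$ into that quantity then produces $\widetilde{C}(\pi)$. First I would collapse the infinite-dimensional divergence to a finite-dimensional one. Both $T_{\pi}$ (a reciprocal process, i.e. a mixture of Brownian bridges with weights $\pi$, cf. \eqref{eq:reciprocal-proj}) and $S_v$ (whose law conditioned on its endpoints is $S_{|x_0,x_1}=W^\epsilon_{|x_0,x_1}$ by construction) share the very same trajectory distribution $W^\epsilon_{|x_0,x_1}$ once the endpoints $(x_0,x_1)$ are fixed. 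Disintegrating the two path measures at $t=0,1$ and applying the chain rule for KL makes the conditional term vanish, leaving $\KL{T_{\pi}}{S_v}=\KL{\pi}{\pi_v}$, where $\pi=\pi^{T_{\pi}}$ and $\pi_v=\pi^{S_v}$ is the plan \eqref{plan-parametric-full}.

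It then remains to expand $\KL{\pi}{\pi_v}$ and isolate the $v$-dependence. Since $\pi\in\Pi(p_0,p_1)$ and $\pi_v$ share the first marginal $p_0$, the ratio inside the logarithm reduces to the conditionals, $\KL{\pi}{\pi_v}=\int\!\int \pi(x_0,x_1)\log\frac{\pi(x_1|x_0)}{\pi_v(x_1|x_0)}\,dx_0dx_1$. I would substitute the explicit form $\log\pi_v(x_1|x_0)=\tfrac{\langle x_0,x_1\rangle}{\epsilon}+\log v(x_1)-\log c_v(x_0)$ read off from \eqref{plan-parametric-full}. The term $\int\!\int\pi\log\pi(x_1|x_0)$ (a conditional entropy of $\pi$) and the cross term $\tfrac{1}{\epsilon}\int\!\int\pi\,\langle x_0,x_1\rangle$ are manifestly free of $v$ and are swept into the constant. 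For the two surviving $v$-dependent terms I would invoke the marginal constraints: $\int\!\int\pi\,\log v(x_1)=\int p_1(x_1)\log v(x_1)\,dx_1$ because the second marginal of $\pi$ is $p_1$, and $\int\!\int\pi\,\log c_v(x_0)=\int p_0(x_0)\log c_v(x_0)\,dx_0$ because the first marginal is $p_0$. Their combination is exactly $\int p_0\log c_v-\int p_1\log v=\mathcal{L}_0(v)$ as in \eqref{light-sb-objective}, giving $\KL{\pi}{\pi_v}=\mathrm{const}(\pi)+\mathcal{L}_0(v)$ and closing the chain.

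I expect the main obstacle to be the rigorous justification of the endpoint disintegration rather than any of the subsequent algebra, which is routine. Concretely, one must argue that the chain rule $\KL{T_{\pi}}{S_v}=\KL{\pi}{\pi_v}+\int\KL{(T_{\pi})_{|x_0,x_1}}{(S_v)_{|x_0,x_1}}\,d\pi(x_0,x_1)$ is valid for these path measures and that the conditional term is genuinely zero, i.e. that $\pi$-almost surely the two endpoint-conditioned laws both coincide with $W^\epsilon_{|x_0,x_1}$; this is precisely where the reciprocal structure of $T_{\pi}$ and the defining property $S_{|x_0,x_1}=W^\epsilon_{|x_0,x_1}$ of $\mathcal{S}(p_0)$ are used. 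A secondary care point is finiteness: $C(\pi)$ and the conditional entropy of $\pi$ may diverge for pathological $\pi$, so one should either restrict to $\pi$ of finite entropy or phrase the statement as an equality of $v$-dependent parts (both sides differing from $\mathcal{L}_0(v)$ by a $v$-independent quantity), which is all that optimization over $v$ requires.
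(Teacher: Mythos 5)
Your proof is correct, but its final leg takes a genuinely different route from the paper's. Both arguments begin identically: Theorem~\ref{thm:tractable-objective-of-optimal-projection} reduces the claim to showing that $\KL{T_{\pi}}{S_v}$ equals $\mathcal{L}_0(v)$ plus a $v$-independent quantity, and both rest on the endpoint disintegration $\KL{T_{\pi}}{S_v}=\KL{\pi}{\pi_v}$, with the conditional term vanishing because $T_{\pi}$ and $S_v$ share the Brownian-bridge conditionals $W^{\epsilon}_{|x_0,x_1}$. From there the paper takes a detour: it reuses the computation from the proof of Theorem~\ref{thm:optimal-projection} to write $\KL{T_{\pi}}{S_v}=\widehat{C}(\pi)+\KL{\pi^*}{\pi^{S_v}}$ (swapping $\pi$ for the EOT plan $\pi^*$ inside the $v$-dependent terms) and then invokes \citep[Proposition 3.1]{korotin2024light}, which states $\KL{\pi^*}{\pi_v}=\mathcal{L}_0(v)-\mathcal{L}^*$, as a black box. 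You instead expand $\KL{\pi}{\pi_v}$ directly: the conditional entropy of $\pi$ and the inner-product term are $v$-free, while the two surviving $v$-dependent terms integrate only against the marginals of $\pi$, giving exactly $\int p_0\log c_v-\int p_1\log v=\mathcal{L}_0(v)$. This is more self-contained --- it never introduces $\pi^*$ and needs no external LightSB result --- and it isolates the one mechanism that powers both the paper's Theorem~\ref{thm:optimal-projection} computation and the cited proposition, namely that the $v$-dependent part of the cross-entropy sees only the marginals of the plan. The paper's route is shorter given the results it already has on the shelf; yours is arguably cleaner as a standalone argument. Your closing caveats (rigorous justification of the disintegration via the reciprocal structure, and finiteness of the $\pi$-dependent constants) concern assumptions the paper also makes, silently.
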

One interesting conclusion from this equivalence is that our LightSB-M solver automatically inherits the theoretical generalization and approximation properties of the LightSB solver; see \citep[\wasyparagraph 3]{korotin2024light} for details about them.

\vspace{-1mm}
\section{Other Related Works}
\vspace{-1mm}
Here, we overview other existing works related to solving SB/EOT. Unlike the works described above, these are less relevant to our study. Still, we want to highlight some aspects of other solvers related to our solver.

\vspace{-1mm}
\subsection{Iterative proportional fitting (IPF) solvers.} 
\vspace{-1mm}
There are several Schrödinger Bridge solvers \citep{vargas2021solving, de2021diffusion, chen2021likelihood} for continuous probability distributions based on the Iterative Proportional Fitting (IPF) procedure \citep{fortet1940resolution, kullback1968probability, ruschendorf1995convergence}. The IPF procedure is related to the Sinkhorn algorithm \citep{cuturi2013sinkhorn} and, as was recently shown in work \citep{vargas2023transport}, coincides with the expectation-maximization (EM) algorithm \citep{dempster1977maximum}. All these three IPF-based SB solvers consist of iterative reversing of Markovian processes and differ only in particular methods to fit a reversion of a process by a neural network. The first two \citep{vargas2021solving, de2021diffusion} methods use similar mean-matching procedures, while the last \citep{chen2021likelihood} utilizes a different approach which includes the estimation of a divergence. 

In \citep{shi2023diffusion} the authors show, that due to iterative nature of one of these solvers \citep{de2021diffusion} it can diverge, due to errors accumulation on each iteration. Furthermore, the authors of \citep{vargas2023transport} show that these solvers tend to lose the information of Wiener Prior of Schrödinger Bridge and converge to the Markovian process that does not solve the SB problem. In turn, \textit{our approach eliminates the need for iterative learning} of a sequence of Markovian processes and is free from the possible issues with divergence or obtaining a biased solution.

\begin{figure*}[!t]
\begin{subfigure}[b]{0.245\linewidth}
\centering
\includegraphics[width=0.995\linewidth]{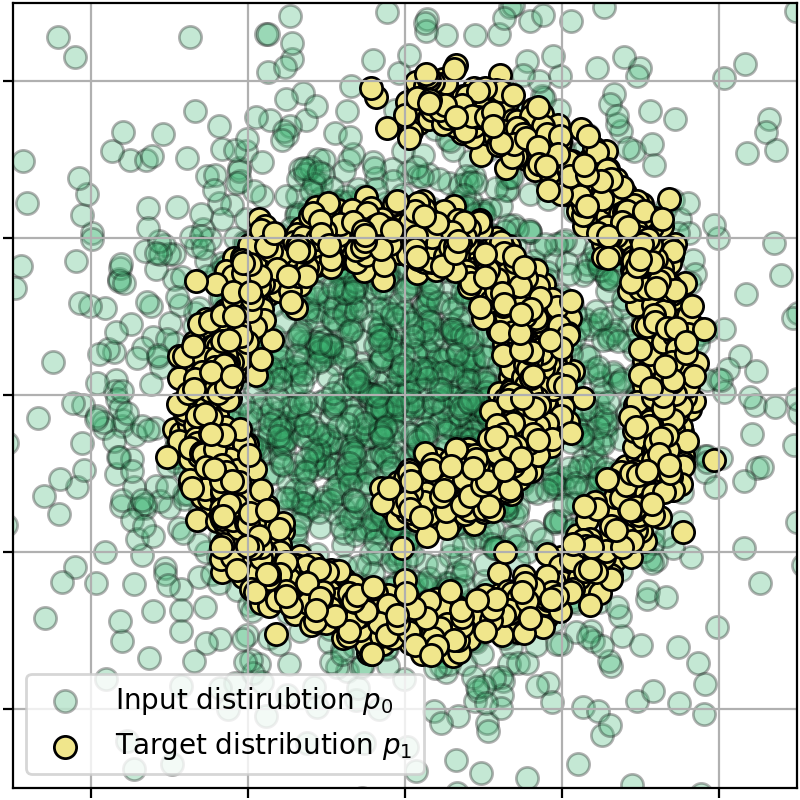}
\caption{\centering ${x\sim p_0
}$, ${y \sim p_1}.$}
\vspace{-1mm}
\end{subfigure}
\vspace{-1mm}\hfill\begin{subfigure}[b]{0.245\linewidth}
\centering
\includegraphics[width=0.995\linewidth]{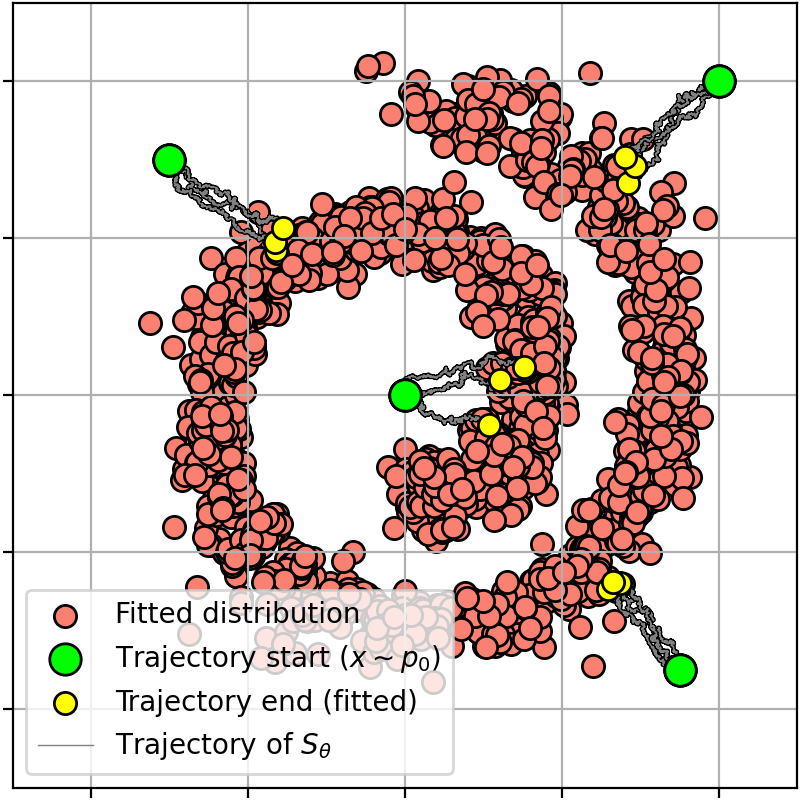}
\caption{\centering $\epsilon=0.01$.}
\vspace{-1mm}
\end{subfigure}
\hfill\begin{subfigure}[b]{0.245\linewidth}
\centering
\includegraphics[width=0.995\linewidth]{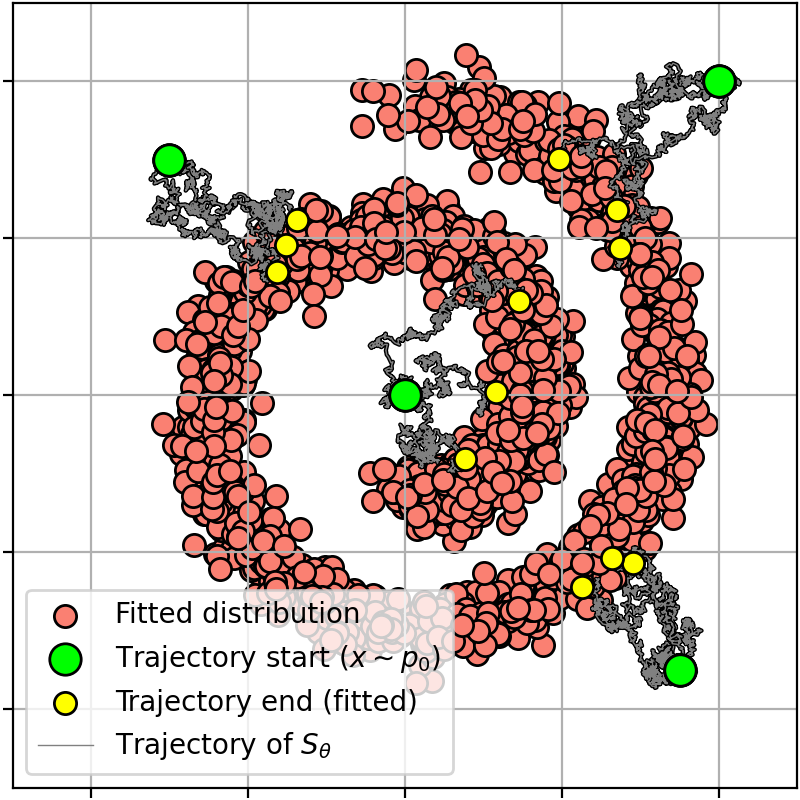}
\caption{\centering $\epsilon=0.1$.}
\vspace{-1mm}
\end{subfigure}
\hfill\begin{subfigure}[b]{0.245\linewidth}
\centering
\includegraphics[width=0.995\linewidth]{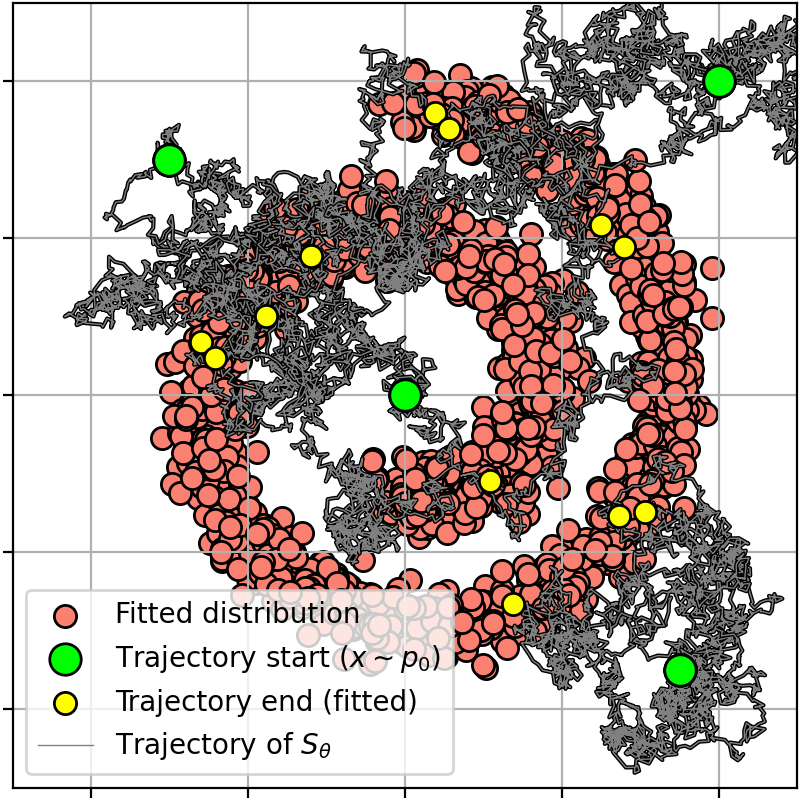}
\caption{\centering $\epsilon=1$.}
\vspace{-1mm}

\end{subfigure}
\vspace{-0mm} \caption{\centering The process $S_{\theta}$ learned with LightSB-M \textbf{(ours)} in \textit{Gaussian} $\!\rightarrow\!$ \textit{Swiss roll} example (\wasyparagraph\ref{sec:exp-2D}).}
\label{fig:swiss-roll}
\vspace{-4mm}
\end{figure*}

\vspace{-1mm}
\subsection{EOT solvers and EOT-based SB solvers.} 
\vspace{-1mm}
Recall that EOT and SB problems are closely related: SB solutions can be recovered from EOT solutions by using Brownian Bridge $W^{\epsilon}_{|x_0,x_1}$ or recovering the drift $g(x_t, t)$, e.g., as in \citep{tong2023simulation}. Due to this, we also give a quick overview of EOT solvers for continuous distributions. Several works \citep{genevay2016stochastic, seguy2018large, daniels2021score} consider solving the EOT problem by utilizing the classic dual EOT problem \citep{genevay2019sample}. Classic dual EOT problem for continuous $p_0$ and $p_1$ is an unconstrained maximization problem over dual variables, also called potentials, which can be parameterized by neural networks and trained. After training, these potentials can be used to directly sample from distribution $\pi^*(x_1|x_0)$ by using additional score model for $\nabla_x \log p_1(x)$ \citep{daniels2021score} or to train neural network model to predict conditional expectation $\mathbb{E}_{\pi^*(x_1|x_0)}x_1$, i.e., the barycentric projection. However, the main disadvantage of these methods is that in practice, dual EOT problem cannot be solved by neural networks for practically meaningful (small) coefficients $\epsilon$ due to numerical errors of calculating dual EOT objective since it includes terms in form $\mathbb{E}_{x_0 \sim p_0, x_1 \sim p_1}\exp(\frac{f(x_0, x_1)}{\epsilon})$. 

There is also one SB solver based on the theory of EOT dual problem \citep{gushchin2023entropic}. This solver directly fits the drift $g$ of the Schrödinger Bridge by using a maximin reformulation of the dual EOT problem and its link to the SB problem. This allows to overcome the numerical problems and solve SB for practically meaningful values of $\epsilon$. 

Our solver is also based on solving EOT and SB using the theory behind the dual EOT problem. Thanks to using parametrization of adjusted Schrödinger potential as in \citep{korotin2024light} instead of EOT potentials as in \citep{seguy2018large, daniels2021score} and using novel optimization objective based on bridge matching, \textit{our method overcomes numerical issues of the previously developed dual EOT-based methods} without the maximin optimization.

\vspace{-1mm}
\subsection{Other SB solvers.} 
\vspace{-1mm}
The authors of \citep{kim2024unpaired} propose a different minimax SB solver by considering the self-similarity of the SB in learning objectives and an additional consistency regularization. While showing good results, their approach requires using neural estimation of entropy, which involves solving additional optimization problem at every minimization step.

All previously considered solvers are designed to solve SB as a problem of finding the optimal translation between two distributions $p_0$, $p_1$ without any paired data from them, but there are also several SB solvers \citep{liu20232, somnath2023aligned} for setups with paired trained data such as the super-resolution. In fact, the concept of bridge matching was introduced in \cite{liu20232} but for the paired setup. The authors work under the assumption that the available paired data is a good approximation of the EOT plan and propose using Bridge matchi to recover the SB from this data, which makes their method related to \citep{tong2023simulation}. As noted earlier, \textit{our solver provably recovers SB using data provided by arbitrary plan $\pi$ between $p_0$ and $p_1$.}

\begin{table*}[h]
\centering
\scriptsize
\setlength{\tabcolsep}{3pt}
\begin{tabular}{rrcccccccccccc}
\toprule
& & \multicolumn{4}{c}{$\epsilon=0.1$} & \multicolumn{4}{c}{$\epsilon=1$} & \multicolumn{4}{c}{$\epsilon=10$}\\
\cmidrule(lr){3-6} \cmidrule(lr){7-10} \cmidrule(l){11-14}
& Solver Type & {$D\!=\!2$} & {$D\!=\!16$} & {$D\!=\!64$} & {$D\!=\!128$} & {$D\!=\!2$} & {$D\!=\!16$} & {$D\!=\!64$} & {$D\!=\!128$} & {$D\!=\!2$} & {$D\!=\!16$} & {$D\!=\!64$} & {$D\!=\!128$} \\
\midrule  
Best solver on benchmark$^\dagger$ & Varies &  $1.94$  &  $13.67$  &  $11.74$  &  $11.4$  &  $1.04$ &  $9.08$ &   $18.05$  &  $15.23$  &  $1.40$  &  $1.27$  &  $2.36$  &  $1.31$ \\
LightSB$^\dagger$ & KL minimization & $0.03$ & $0.08$ & $0.28$ & $0.60$ & $0.05$ & $0.09$ & $0.24$ & $0.62$ & $0.07$ & $0.11$ & $0.21$ & $0.37$ \\ 
\hline
DSBM & \multirow{5}{*}{Bridge matching} & $5.2$ & $16.8$ & $37.3$ & $35$ & $0.3$ & $1.1$ & $9.7$ & $31$ & $3.7$ & $105$ & $3557$ & $15000$ \\   
$\text{SF}^2$M-Sink & & $0.54$ & $3.7$ & $9.5$ & $10.9$ & $0.2$ & $1.1$ & $9$ & $23$ & $0.31$ & $4.9$ & $319$ & $819$ \\  
LightSB-M (ID, \textbf{ours}) & & $0.04$ & $0.18$ & $0.77$ & $1.66$ & $0.09$ & $\mathbf{0.18}$ & $0.47$ & $1.2$ & $\mathbf{0.12}$ & $0.19$ & $\mathbf{0.36}$ & $0.71$ \\  
LightSB-M (MB,  \textbf{ours}) & & $\mathbf{0.02}$ & $\mathbf{0.1}$ & $0.56$ & $1.32$ & $0.09$ & $\mathbf{0.18}$ & $\mathbf{0.46}$ & $\mathbf{1.2}$ & $0.13$ & $\mathbf{0.18}$ & $\mathbf{0.36}$ & $0.71$ \\   
LightSB-M (GT,  \textbf{ours}) & & $\mathbf{0.02}$ & $\mathbf{0.1}$ & $\mathbf{0.49}$ & $\mathbf{1.16}$ & $\mathbf{0.09}$ & $\mathbf{0.18}$ & $0.47$ & $\mathbf{1.2}$ & $0.13$ & $\mathbf{0.18}$ & $\mathbf{0.36}$ & $\mathbf{0.69}$ \\
\bottomrule
\end{tabular}
\vspace{-1mm}
\captionsetup{justification=centering, font=footnotesize}
\caption{Comparisons of $\text{cB}\mathbb{W}_{2}^{2}\text{-UVP}\downarrow$ (\%) between the optimal plan $\pi^*$ and the learned plan  $\pi_{\theta}$ on the EOT/SB benchmark (\wasyparagraph\ref{sec:exp-benchmark}). \\ The best metric over \textit{bridge matching} solvers is \textbf{bolded}. Results marked with $\dagger$ are taken from \cite{korotin2024light}.}
\label{table-cbwuvp-benchmark}
\vspace{-1mm}
\end{table*}

\color{black}
\begin{table*}[!t]
\vspace{1mm}
\centering
\scriptsize
\begin{tabular}{ |c|c|c|c|c| }
\hline
\textbf{Solver type} & \backslashbox{\textbf{Solver}}{\textbf{DIM}} & \textbf{50} & \textbf{100} & \textbf{1000} \\ 
\hline
 Langevin-based & \citep{mokrov2024energyguided}$^\dagger$ [1 GPU V100] & $2.39 \pm 0.06$ ($19$ m) & $2.32 \pm 0.15$ ($19$ m) & $1.46 \pm 0.20$ ($15$ m) \\
\hline
 Minimax & \citep{gushchin2023entropic}$^\dagger$ [1 GPU V100] & $2.44 \pm 0.13$ ($43$ m) & $2.24 \pm 0.13$ ($45$ m) & $1.32 \pm 0.06$ ($71$ m) \\
 \hline
 IPF & \citep{vargas2021solving}$^\dagger$ [1 GPU V100] & $3.14 \pm 0.27$ ($8$ m) & $2.86 \pm 0.26$ ($8$ m) & $2.05 \pm 0.19$ ($11$ m) \\
\hline
 KL minimization & LightSB \citep{korotin2024light}$^\dagger$ [4 CPU cores] & $2.31 \pm 0.27$ ($65$ s) & $2.16 \pm 0.26$ ($66$ s) & $1.27 \pm 0.19$ ($146$ s) \\
 \hline
 \multirow{4}{*}{Bridge matching} & DSBM  \citep{shi2023diffusion} [1 GPU V100] & $2.46 \pm 0.1$ ($6.6$ m) & $2.35 \pm 0.1$ ($6.6$ m) & $1.36 \pm 0.04$ ($8.9$ m) \\
 \cline{2-5}
 & $\text{SF}^2$M-Sink \citep{tong2023simulation} [1 GPU V100] & $2.66 \pm 0.18$ ($8.4$ m) & $2.52 \pm 0.17$ ($8.4$ m) & $1.38 \pm 0.05$ ($13.8$ m) \\
 \cline{2-5} 
 & LightSB-M (ID, \textbf{ours}) [4 CPU cores] & $2.347 \pm 0.11$ ($58$ s) & $2.174 \pm 0.08$ ($60$ s) & $1.35 \pm 0.05$ ($147$ s) \\
 \cline{2-5} 
  & LightSB-M (MB, \textbf{ours}) [4 CPU cores] & $\mathbf{2.33} \pm 0.09$ ($80$ s) & $\mathbf{2.172} \pm 0.08$ ($80$ s) & $\mathbf{1.33} \pm 0.05$ ($176$ s) \\
\hline
\end{tabular}
\vspace{-1mm}
\captionsetup{justification=centering, font=footnotesize}
 \caption{Energy distance (averaged for two setups and 5 random seeds) on the MSCI dataset (\wasyparagraph\ref{sec:exp-single-cell}) along with $95\%$-confidence interval ($\pm$ intervals) and average training times (s - seconds, m - minutes). The best \textit{bridge matching} solver according to the mean value is \textbf{bolded}. Results marked with $\dagger$ are taken from \citep{korotin2024light}.}
 \label{table-sc-comparison}
\vspace{-5mm}
\end{table*}

\vspace{-1mm}
\section{Experimental Illustrations}\label{sec:experiments}
\vspace{-1mm}
To evaluate our new LightSB-M solver, we considered several setups from related works.
The code for our solver is written in \texttt{PyTorch} and available at \url{https://github.com/SKholkin/LightSB-Matching}. For each experiment, we present a separate self-explaining Jupyter notebook, which can be used to reproduce the results of our solver. We provide the technical \underline{details} in Appendix~\ref{app:solver-details}.

\vspace{-1mm}
\subsection{Qualitative 2D Example} \label{sec:exp-2D}
\vspace{-1mm}
We start our evaluation with an illustrative 2D setup. We solve the SB between a Gaussian distribution $p_0$ and a Swiss roll $p_1$. We run our LightSB-M solver with mini-batch (MB) discrete OT as plan $\pi$ for different values of the coefficient $\epsilon$ and present the results in Figure~\ref{fig:swiss-roll}. As expected, we see that the amount of noise in the trajectories and the stochasticity of the learned map are proportional to coefficient $\epsilon$. The technical \underline{details} of this setup are given in Appendix~\ref{app:hyperparameters-for-2D}.

\vspace{-1mm}
\subsection{Quantitative Evaluation on the SB Benchmark}
\vspace{-1mm}
\label{sec:exp-benchmark}
We use the SB mixtures benchmark proposed by \citep[\wasyparagraph{4}]{gushchin2023building} to experimentally verify that our approach based on the optimal projection is indeed able to solve the Schrödinger Bridge between $p_0$ and $p_1$ \textit{by using any reciprocal process $T_{\pi}$, $\pi \in \Pi(p_0, p_1)$}. The benchmark provides continuous probability distribution pairs $p_0,p_1$ for dimensions $D \in \{2,16,64,128\}$ with the known EOT plan $\pi^*(x_0, x_1)$ for parameter $\epsilon \in \{0.1,1.10\}$. To evaluate the quality of the SB solution (EOT plan) we use $\text{cB}\mathbb{W}_{2}^{2}\text{-UVP}$ metric as suggested by the authors \citep[\wasyparagraph{5}]{gushchin2023building}. Additionally, we study how well the solvers restore the target distribution $p_1$ in Appendix~\ref{app:benchmark}.

We provide results of our LightSB-M solver with independent (ID) and mini-batch discrete OT (MB) as $\pi$ in $T_{\pi}$ for mixture benchmark pairs in Table~\ref{table-cbwuvp-benchmark}. Since the benchmark provides the ground truth EOT plan $\pi^{*}$ (GT), we also run our solver with it. Note that we have access to the GT EOT plan thanks to the benchmark, and in regular setups there is, of course, no access to it. As shown in the Table~\ref{table-cbwuvp-benchmark}, our solver demonstrates comparable performance to the best among other solvers for all considered plans $\pi$. As noted in \citep[\wasyparagraph{5.2}]{korotin2024light}, the mixture parameterization used by LightSB and which we adapt in our LightSB-M solver may introduce some inductive bias, since it uses the analogous principles used to construct the benchmark.

\vspace{-1mm}\hspace{-0.8mm}\fbox{%
    \parbox{0.985\linewidth}{
    \centering
    We empirically see that our LightSB-M solver finds the same (optimal) solution for all considered plans $\pi$.
    }
}

\textbf{Baselines.} We present results for other bridge matching methods such as DSBM \citep{shi2023diffusion}, which uses Markovian and reciprocal projections, and $\text{SF}^2$M-Sink \citep{tong2023simulation}, which uses an approximation of the EOT plan by the Sinkhorn algorithm \citep{cuturi2013sinkhorn}. On the setups with $\epsilon=10$ both methods exibits difficulties due to the necessity to learn SDE with high magnitude.  On the setups with $\epsilon=0.1$ and $\epsilon=1$, $\text{SF}^2$M-Sink works better than DSBM. This result may seem counterintuitive at first, since DSBM methods should find the true SB solution, while $\text{SF}^2$M-Sink should find some approximation to it based on how close the minibatch discrete EOT approximates the GT EOT plan. One possible reason is that DSBM simply requires more iterations of Markovian/reciprocal projections. However, in our experiments we observe that increasing the number of iterations does not improve the quality.

We provide an \underline{additional study} of dynamic metrics and the inference speed of our solver in Appendix~\ref{app:benchmark}.



\vspace{-1mm}
\subsection{Quantitative Evaluation on Biological Data}
\vspace{-1mm}
\label{sec:exp-single-cell}
We evaluate our algorithm on the inference of cell trajectories from unpaired single-cell data problem, where OT/SB is widely used \citep{vargas2021solving, tong2023simulation, koshizuka2022neural}. We consider the recent high-dimensional single-cell setup provided by \cite{tong2023simulation} based on the dataset from the Kaggle competition "Open Problems - Multimodal Single-Cell Integration." This dataset provides single-cell data from four human donors on days $2,3,4$ and $7$ and describes the gene expression levels of distinct cells. The task of this setup is to learn a trajectory model for the cell dynamics, given only unpaired samples at two time points, representing distributions $p_0$ and $p_1$. As in related works \citep{tong2023simulation, korotin2024light}, we use PCA projections of the original data with $\text{DIM} \in \{50, 100, 1000\}$ components. 

In our experiments, we consider two setups by taking data from two different days as $p_0, p_1$ to solve the Shrödinger Bridge and one intermediate day for evaluation. The first setup includes data from day 2 as $p_0$, data from day $4$ as $p_1$, and data from day $3$ for evaluation, while the second setup includes data from day $3$ as $p_0$, data from day $7$ as $p_1$, and data from day $4$ for evaluation. At evaluation, we use learned models to sample one trajectory for each cell from the initial distribution $p_0$ and then compare the predicted distribution at the intermediate time point with the ground truth data distribution. For comparison, we use energy distance \citep{rizzo2016energy} and present results in Table~\ref{table-sc-comparison}.

\begin{figure*}[!t]
\begin{subfigure}[b]{0.49\linewidth}
\centering
\includegraphics[width=0.995\linewidth]{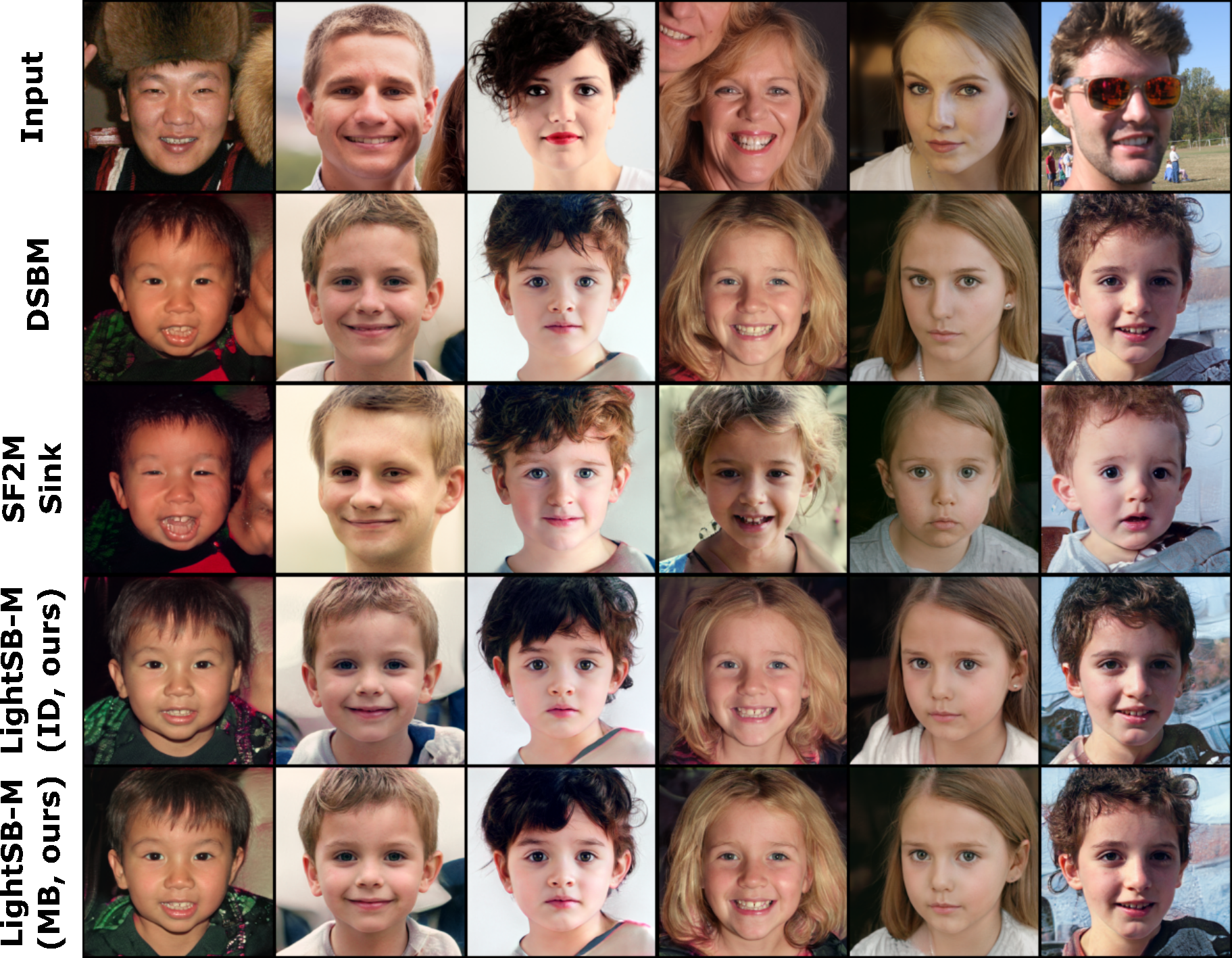}
\caption{\centering Adult $\rightarrow$ Child}
\vspace{-1mm}
\end{subfigure}
\vspace{-1mm}\hfill\begin{subfigure}[b]{0.49\linewidth}
\centering
\includegraphics[width=0.995\linewidth]{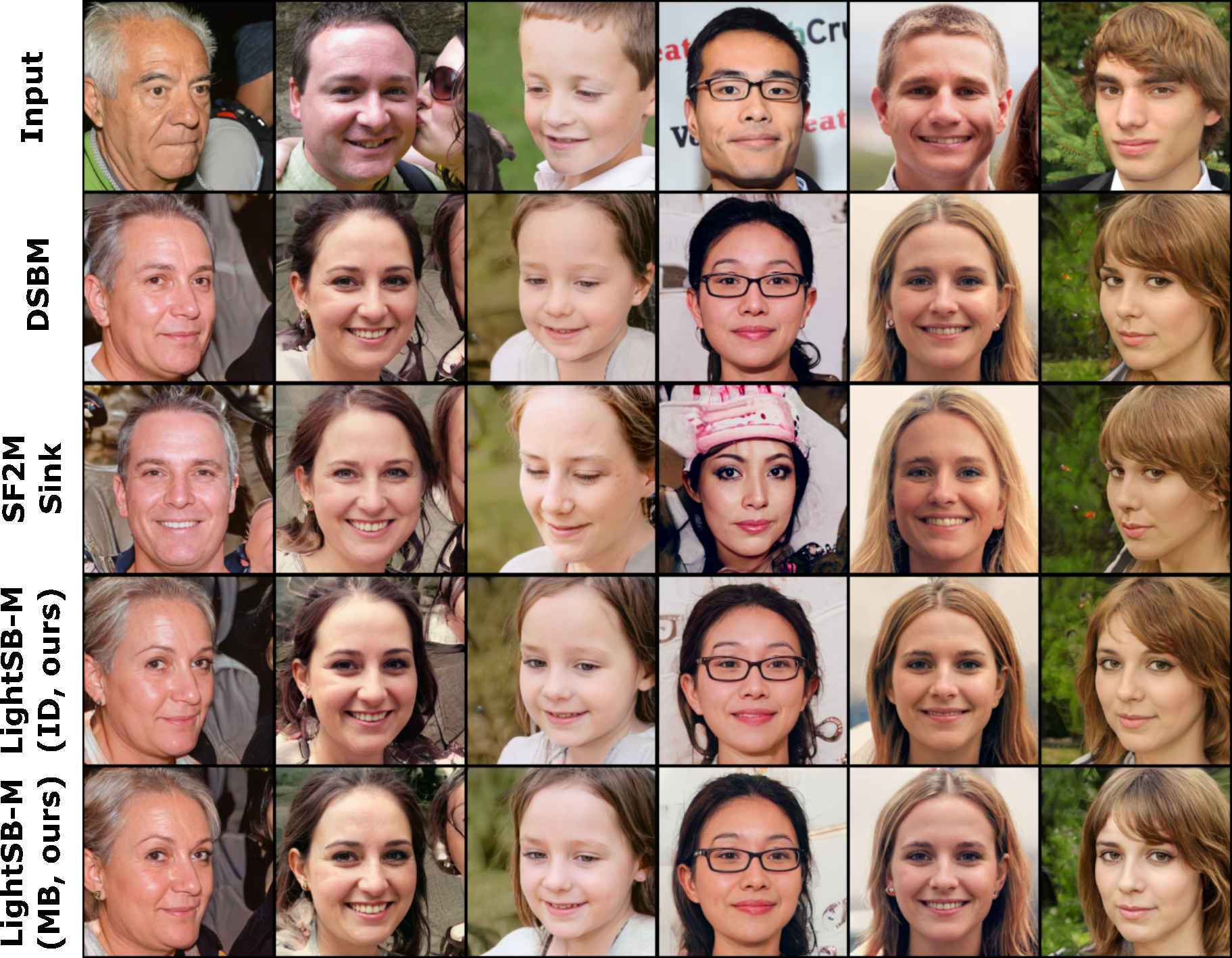}
\caption{\centering Man $\rightarrow$ Woman}
\vspace{-1mm}
\end{subfigure}
\vspace{-0mm} \caption{\centering Unpaired translation between subsets of FFHQ dataset (1024x1024) performed by various SB solvers (\wasyparagraph\ref{sec:exp-image}) in the latent space of ALAE \citep{pidhorskyi2020adversarial}.}\label{fig:alae}
\vspace{-4mm}
\end{figure*}

We see that our LightSB-M's solution with independent (ID) and minibatch discrete OT (MB) plans for $T_{\pi}$ provides the same metrics since it learns the same solution, as follows from the developed theory. It also shows performance on the same level as other neural network-based matching methods such as DSBM and $\text{SF}^2$M-Sink, but converges faster even without using GPU similar to the LightSB solver.

In Appendix~\ref{app:single-cell}, we provide the technical details for this setup and \underline{additional results} for different values of $\epsilon$.

\vspace{-1mm}
\subsection{Comparison on Unpaired Image-to-image Transfer} \label{sec:exp-image}
\vspace{-1mm}

Another popular setup that involves learning a translation between two distributions without paired data is image-to-image translation \citep{zhu2017unpaired}. Methods based on SB show promising results in solving this problem thanks to the perfect theoretical agreement of this setup with the SB formulation \citep{shi2023diffusion}. Due to the used parameterization based on Gaussian mixture, learning the translation between low-dimensional image manifolds is difficult for ${\text{LightSB-M}}$. Fortunately, many approaches use autoencoders \citep{rombach2022high} for more efficient generation and translation. We follow the setup of \cite{korotin2024light} with the pre-trained ALAE autoencoder \cite{pidhorskyi2020adversarial} on $1024 \times 1024$ FFHQ dataset \citep{karras2019style}. 

We present the qualitative results of our solver with discrete minibatch OT plan (MB) and independent plan (ID) in Fig~\ref{fig:alae}. For comparison, we also provide results of DSBM and $\text{SF}^2$M-Sink. Our LightSB-M solver converges to nearly the same solution for both ID and MB plans and demonstrates good results. The samples provided by DSBM are close to the samples of LightSB-M, which is expected since both methods provide theoretical guarantees for solving the SB problem. Samples obtained by $\text{SF}^2$M-Sink slightly differ, probably due to the bias of the discrete EOT plans. We provide \underline{additional examples} of translation in Appendix~\ref{sec-details-alae}. The \underline{details of the baselines} are given in Appendix~\ref{app:baselines}.

\vspace{-1mm}
\section{Discussion}
\vspace{-1mm}
\textbf{Potential impact.} Our main contribution is methodological: we show that one may perform just a single (but \textit{optimal}) bridge matching step to learn SB. This finding helps us eliminate limitations of existing bridge matching-based approaches, such as heuristical minibatch OT approximations or error accumulation during training. We believe that this insight is a significant step towards developing novel efficient computational approaches for SB/EOT tasks. 

\textbf{Limitations.} Given an adjusted Schrödinger potential $v$, it may be not easy to compute the drift $g_{v}$ \eqref{sb-drift} of $S_{v}$ needed to perform the optimal SB matching. We employ the Gaussian mixture parameterization for $v$ for which this drift $g_{v}$ is analytically known \eqref{sb-drift-gaussian}. This allows to easily implement our optimal SB matching in practice and obtain a fast bridge matching based solver. Still such a parameterization sometimes may be not sufficient, e.g.,  for large-scale generative modeling tasks. We point to developing ways to use more general parameterization of $v$ to our optimal SB matching, e.g., \underline{neural-network}-based, as a promising research avenue. We show possible steps in this direction in Appendix~\ref{app:neural-network-parametrization}.

One other limitation of our LightSB-M solver is that it is applicable to a limited set of priors. In this paper, we only consider the Wiener prior, which is one of the most popular priors used for SB. However, our method can be applied to other priors by changing the variables. These include Arithmetic Brownian Motion and Geometric Brownian Motion, also known as the Black-Scholes model, which is widely used in mathematical finance. Developing light solvers for Scr\"{o}dinger Bridges with more general priors is a promising direction for the future research.

\vspace{-1mm}
\section*{Acknowledgements}
\vspace{-1mm}
The work was supported by the Analytical center under the RF Government (subsidy agreement 000000D730321P5Q0002, Grant No. 70-2021-00145 02.11.2021).

\section*{Impact Statement}
This paper presents work whose goal is to advance the field of Machine Learning. There are many potential societal consequences of our work, none which we feel must be specifically highlighted here.

\bibliography{references}
\bibliographystyle{icml2024}

\newpage
\appendix
\onecolumn

\section{Proofs}\label{app:proofs}
\begin{proof}[Proof of Theorem \ref{thm:optimal-projection}]
Let $p_0,p_1$ denote the marginals of $\pi$. Let $\pi^*$ be the EOT plan between $p_0,p_1$. Let $p^S_0, p^S_1$ denote the distribution of $S$ at $t=0$ and $t=1$, respectively. We use the fact that each element $S$ of $\mathcal{S}$ is a reciprocal process with some EOT plan $\pi^S \in \Pi(p_0^S, p_1^S)$, i.e. $S=\int W^{\epsilon}_{|x_0, x_1}d\pi^S(x_0,x_1)$, recall \wasyparagraph{\ref{sec-background-sb}}. In turn, $\pi^S$ can be represented through the input density $p^S_0$ and the potential $v^S$ as in \eqref{plan-parametric-full}, i.e.:

\begin{equation}\label{eq:plan-S-form}
\pi^S(x_0, x_1) = p^S_0(x_0)\frac{\exp\big(\sfrac{\langle x_0,x_1\rangle}{\epsilon}\big)v^S(x_1)}{c_{v^{S}}(x_0)} 
\end{equation}

We derive:
\begin{eqnarray}
    \KL{T_{\pi}}{S}  = \KL{\pi}{\pi^{S}} + \int_{\mathbb{R}^D \times \mathbb{R}^D} \KL{T_{\pi|x_0,x_1}}{S_{|x_0,x_1}} \pi(x_0, x_1)dx_0dx_1  =
    \label{eq:op-proff-KL-disintegration}
    \\
    \KL{\pi}{\pi^{S}} + \int_{\mathbb{R}^D \times \mathbb{R}^D} \underbrace{\KL{W^{\epsilon}_{|x_0,x_1}}{W^{\epsilon}_{|x_0,x_1}}}_{=0} \pi(x_0, x_1)dx_0dx_1  = 
    \label{eq:op-proff-KL-inner-zero}
    \\
    \int_{\mathbb{R}^D \times \mathbb{R}^D } \frac{\log \pi(x_0,x_1)}{\log \pi^{S}(x_0,x_1)} \pi(x_0,x_1)dx_0dx_1  =
    \nonumber
    \\
    \int_{\mathbb{R}^D \times \mathbb{R}^D } \log \pi(x_0,x_1) \pi(x_0,x_1)dx_0dx_1 - \int_{\mathbb{R}^D \times \mathbb{R}^D } \log \pi^{S}(x_0,x_1) \pi(x_0,x_1)dx_0dx_1   = 
    \nonumber
    \\
    -H(\pi) - \int_{\mathbb{R}^D \times \mathbb{R}^D } \log \pi^{S}(x_0,x_1) \pi(x_0,x_1)dx_0dx_1  =
    \label{eq:op-proof-KL-pre-plan}
    \\
    -H(\pi)  - \int_{\mathbb{R}^D \times \mathbb{R}^D } \log \Big(p_0^S(x_0)\frac{\exp\Big(\sfrac{\langle x_0,x_1\rangle}{\epsilon}\Big)v^S(x_1)}{c_{v^{S}}(x_0)}\Big) \pi(x_0,x_1)dx_0dx_1  = 
    \label{eq:op-proof-KL-plan}
    \\
    -H(\pi) - \int_{\mathbb{R}^D \times \mathbb{R}^D } \Big( \log p_0^S(x_0) + \langle x_0,x_1\rangle + \log v^{S}(x_1) - \log c_{v^{S}}(x_0) \Big) \pi(x_0,x_1)dx_0dx_1 =
    \nonumber
    \\
    -H(\pi) - \int_{\mathbb{R}^D \times \mathbb{R}^D } \langle x_0,x_1\rangle \pi(x_0,x_1)dx_0dx_1  
    \nonumber
    \\
    - \int_{\mathbb{R}^D \times \mathbb{R}^D } \Big( \log p_0^S(x_0) - \log c_{v^{S}}(x_0) \Big) \pi(x_0,x_1)dx_0dx_1 - \int_{\mathbb{R}^D \times \mathbb{R}^D } \log v^{S}(x_1) \pi(x_0,x_1)dx_0dx_1 =
    \nonumber
    \\
    -H(\pi) -\int_{\mathbb{R}^D \times \mathbb{R}^D } \langle x_0,x_1\rangle \pi(x_0,x_1)dx_0dx_1 - \int_{\mathbb{R}^D} \log v^{S}(x_1) \underbrace{\big( \int_{\mathbb{R}^D} \pi(x_0|x_1)dx_0\big)}_{=1 = \int_{\mathbb{R}^D} \pi^*(x_0|x_1)dx_0} \underbrace{\pi(x_1)}_{=\pi^*(x_1)} dx_1
    \nonumber
    \\
    - \int_{\mathbb{R}^D} \big\{ ( \log p_0^S(x_0) - \log c_{v^{S}}(x_0)) \underbrace{\int_{\mathbb{R}^D} \pi(x_1|x_0)dx_1}_{=1 = \int_{\mathbb{R}^D} \pi^*(x_1|x_0)dx_1}\big\}\underbrace{\pi(x_0)}_{=\pi^*(x_0)}dx_0   = 
    \nonumber
    \\
    -H(\pi) -\int_{\mathbb{R}^D \times \mathbb{R}^D } \langle x_0,x_1\rangle \pi(x_0,x_1)dx_0dx_1 - \int_{\mathbb{R}^D} \log v^{S}(x_1) \big( \int_{\mathbb{R}^D} \pi^*(x_0|x_1)dx_0\big) \pi^*(x_1) dx_1
    \nonumber
    \\
    - \int_{\mathbb{R}^D} \big\{ ( \log p_0^S(x_0) - \log c_{v^{S}}(x_0)) \int_{\mathbb{R}^D} \pi^*(x_1|x_0)dx_1 \big\}\pi^*(x_0)dx_0 = 
    \nonumber
    \\
    -H(\pi) -\int_{\mathbb{R}^D \times \mathbb{R}^D } \langle x_0,x_1\rangle \pi(x_0,x_1)dx_0dx_1 - \int_{\mathbb{R}^D \times \mathbb{R}^D } \log v^{S}(x_1) \pi^*(x_0,x_1)dx_0dx_1
    \nonumber
    \\
    - \int_{\mathbb{R}^D \times \mathbb{R}^D } \Big( \log p_0^S(x_0) - \log c_{v^{S}}(x_0) \Big) \pi^*(x_0,x_1)dx_0dx_1 =
    \nonumber
    \\
    -H(\pi) -\int_{\mathbb{R}^D \times \mathbb{R}^D } \langle x_0,x_1\rangle \pi(x_0,x_1)dx_0dx_1 
    \nonumber
    \\
    + \underbrace{\int_{\mathbb{R}^D \times \mathbb{R}^D } \langle x_0,x_1\rangle \pi^*(x_0,x_1)dx_0dx_1 -\int_{\mathbb{R}^D \times \mathbb{R}^D } \langle x_0,x_1\rangle \pi^*(x_0,x_1)dx_0dx_1}_{=0}
    \nonumber
    \\
    - \int_{\mathbb{R}^D \times \mathbb{R}^D } \Big( \log p_0^S(x_0) + \log v^{S}(x_1) - \log c_{v^{S}}(x_0) \Big) \pi^*(x_0,x_1)dx_0dx_1 = 
    \nonumber
    \\
    -H(\pi) -\int_{\mathbb{R}^D \times \mathbb{R}^D } \langle x_0,x_1\rangle \pi(x_0,x_1)dx_0dx_1 + \int_{\mathbb{R}^D \times \mathbb{R}^D } \langle x_0,x_1\rangle \pi^*(x_0,x_1)dx_0dx_1
    \nonumber
    \\
    - \int_{\mathbb{R}^D \times \mathbb{R}^D } \Big( \log p_0^S(x_0) + \langle x_0,x_1\rangle + \log v^{S}(x_1) - \log c_{v^{S}}(x_0) \Big) \pi^*(x_0,x_1)dx_0dx_1 = 
    \nonumber
    \\
    -H(\pi) -\int_{\mathbb{R}^D \times \mathbb{R}^D } \langle x_0,x_1\rangle \pi(x_0,x_1)dx_0dx_1 + \int_{\mathbb{R}^D \times \mathbb{R}^D } \langle x_0,x_1\rangle \pi^*(x_0,x_1)dx_0dx_1
    \nonumber
    \\
    - \int_{\mathbb{R}^D \times \mathbb{R}^D } \log \Big(p_0^S(x_0)\underbrace{\frac{\exp\Big(\sfrac{\langle x_0,x_1\rangle}{\epsilon}\Big)v^S(x_1)}{c_{v^{S}}(x_0)}}_{\pi^{S}(x_1|x_0)}\Big) \pi^*(x_0,x_1)dx_0dx_1 =
    \nonumber
    \\
    -H(\pi) -\int_{\mathbb{R}^D \times \mathbb{R}^D } \langle x_0,x_1\rangle (\pi(x_0, x_1) - \pi^*(x_0,x_1))dx_0dx_1 - \int_{\mathbb{R}^D \times \mathbb{R}^D } \log \pi^S(x_0, x_1) \pi^*(x_0,x_1)dx_0dx_1 
    \nonumber
    \\
    + \underbrace{\int_{\mathbb{R}^D \times \mathbb{R}^D } \log \pi^*(x_0, x_1) \pi^*(x_0, x_1)dx_0dx_1 - \int_{\mathbb{R}^D \times \mathbb{R}^D } \log \pi^*(x_0, x_1) \pi^*(x_0, x_1)dx_0dx_1}_{=0} =
    \nonumber
    \\
    \underbrace{-H(\pi) -\int_{\mathbb{R}^D \times \mathbb{R}^D } \langle x_0,x_1\rangle (\pi(x_0, x_1) - \pi^*(x_0,x_1))dx_0dx_1 - \int_{\mathbb{R}^D \times \mathbb{R}^D } \log \pi^*(x_0, x_1) \pi^*(x_0,x_1)dx_0dx_1}_{\defeq \widehat{C}(\pi)}
    \nonumber
    \\
    + \underbrace{\int_{\mathbb{R}^D \times \mathbb{R}^D } \log \frac{\pi^*(x_0, x_1)}{\pi^S(x_0, x_1)} \pi^*(x_0,x_1)dx_0dx_1}_{= \KL{\pi^*}{\pi^S}} = 
    \nonumber
    \\
    \widehat{C}(\pi) + \KL{\pi^*}{\pi^S}.
    \nonumber
\end{eqnarray}
In \eqref{eq:op-proff-KL-disintegration} we use disintegration theorem for KL divergence to distinguish process plan $\pi^S$ and "inner part" \citep[Appendix C, D]{vargas2021solving}. In transition from \eqref{eq:op-proff-KL-disintegration} to \eqref{eq:op-proff-KL-inner-zero} we notice, that $T_{\pi|x_0, x_1}=W^{\epsilon}_{|x_0, x_1}$ and $S_{|x_0, x_1} = W^{\epsilon}_{|x_0, x_1}$, since $T_{\pi}$ is a reciprocal process as well as Schrödinger Bridge $S$. In transition from \eqref{eq:op-proof-KL-pre-plan} to \eqref{eq:op-proof-KL-plan} we use the fact, that $\pi^S$ is given by \eqref{eq:plan-S-form}. Since
\begin{eqnarray}
    \KL{T_{\pi}}{S} = \widehat{C}(\pi) + \KL{\pi^*}{\pi^S},
    \nonumber
\end{eqnarray}
the minimum of $\KL{T_{\pi}}{S}$ is achieved for $S$ such that $\pi^S = \pi^*$, i.e., when $S$  is the SB between $p_0$ and $p_1$.

\end{proof}

\begin{proof}[Proof of Theorem \ref{thm:tractable-objective-of-optimal-projection}]
We start by using a Pythagorean theorem for Markovian projection \citep[Lemma 6]{shi2023diffusion}
\begin{eqnarray}
    \KL{T_{\pi}}{S_v} = \KL{T_{\pi}}{\text{proj}_{\mathcal{M}}(T_{\pi})} + \KL{\text{proj}_{\mathcal{M}}(T_{\pi})}{S_v},
\end{eqnarray}
where the drift $g_{\mathcal{M}}$ of the Markoian projection $\text{proj}_{\mathcal{M}}(T_{\pi})$ is given by \eqref{eq:markovian-proj-drift}:
\begin{eqnarray}
    g_{\mathcal{M}}(x_t, t) = \int_{\mathbb{R}^D} \frac{x_1 - x_t}{1-t} dp_{T_{\pi}}(x_1|x_t).
    \label{eq:second-proof-drift}
\end{eqnarray}
We use the expression of KL between Markovian processes starting from the same distribution $p_0$ through their drifts \citep{pavon1991free} and note that Markovian projection preserve the time marginals $p_{T_{\pi}}(x_t)$:
\begin{eqnarray}
    \KL{\text{proj}_{\mathcal{M}}(T_{\pi})}{S_v} = \frac{1}{2\epsilon} \int_{0}^1 \int_{\mathbb{R}^D}|| g_v(x_t, t) - g_{\mathcal{M}}(x_t, t)||^2 dp_{T_{\pi}}(x_t) dt
\end{eqnarray}
Then we substitute $g_{\mathcal{M}}$ by \eqref{eq:second-proof-drift}:
\begin{eqnarray}
    \KL{\text{proj}_{\mathcal{M}}(T_{\pi})}{S_v} = \frac{1}{2\epsilon} \int_{0}^1 \int_{\mathbb{R}^D} || g_v(x_t, t) - g_{\mathcal{M}}(x_t, t)||^2 dp_{T_{\pi}}(x_t)dt =
    \nonumber
    \\
    \frac{1}{2\epsilon} \int_{0}^1 \int_{\mathbb{R}^D} || g_v(x_t, t) - \int_{\mathbb{R}^D} \frac{x_1 - x_t}{1-t} dp_{T_{\pi}}(x_1|x_t) ||^2 dp_{T_{\pi}}(x_t)dt =
    \nonumber
    \\
    \frac{1}{2\epsilon} \int_{0}^1 \int_{\mathbb{R}^D} \Big\{  || g_v(x_t, t)||^2 - 2 \langle g_v(x_t, t), \int_{\mathbb{R}^D} \frac{x_1 - x_t}{1-t} dp_{T_{\pi}}(x_1|x_t) \rangle \Big\}dp_{T_{\pi}}(x_t)dt + 
    \nonumber
    \\
    \underbrace{\frac{1}{2\epsilon} \int_{0}^1 \int_{\mathbb{R}^D} ||\int_{\mathbb{R}^D} \frac{x_1 - x_t}{1-t} dp_{T_{\pi}}(x_1|x_t)||^2 dp_{T_{\pi}}(x_t)dt}_{\defeq C'(\pi)} =
    \nonumber
    \\
    \frac{1}{2\epsilon} \int_{0}^1 \int_{\mathbb{R}^D} \Big\{  || g_v(x_t, t)||^2 - 2 \langle g_v(x_t, t), \int_{\mathbb{R}^D} \frac{x_1 - x_t}{1-t} dp_{T_{\pi}}(x_1|x_t) \rangle \Big\}dp_{T_{\pi}}(x_t)dt + C'(\pi) =
    \nonumber
    \\
    \frac{1}{2\epsilon} \int_{0}^1 \int_{\mathbb{R}^D} \int_{\mathbb{R}^D} \Big\{|| g_v(x_t, t)||^2 - 2 \langle g_v(x_t, t),  \frac{x_1 - x_t}{1-t}\rangle \Big\}\underbrace{dp_{T_{\pi}}(x_1|x_t)dp_{T_{\pi}}(x_t)}_{dp_{T_{\pi}}(x_t, x_1)}dt + C'(\pi) =
    \nonumber
    \\
    \frac{1}{2\epsilon} \int_{0}^1 \int_{\mathbb{R}^D} \int_{\mathbb{R}^D} \Big\{|| g_v(x_t, t)||^2 - 2 \langle g_v(x_t, t),  \frac{x_1 - x_t}{1-t}\rangle \Big\}dp_{T_{\pi}}(x_t, x_1)dt + C'(\pi) =
    \nonumber
    \\
    \frac{1}{2\epsilon} \int_{0}^1 \int_{\mathbb{R}^D} \int_{\mathbb{R}^D} \Big\{|| g_v(x_t, t) - \frac{x_1 - x_t}{1-t}||^2 \Big\}dp_{T_{\pi}}(x_t, x_1)dt - 
    \nonumber
    \\
    \underbrace{\frac{1}{2\epsilon} \int_{0}^1 \int_{\mathbb{R}^D} \int_{\mathbb{R}^D} ||\frac{x_1 - x_t}{1-t}||^2 dp_{T_{\pi}}(x_t, x_1)dt + C'(\pi)}_{\defeq C''(\pi)} =
    \nonumber
    \\
    \frac{1}{2\epsilon} \int_{0}^1 \int_{\mathbb{R}^D} \int_{\mathbb{R}^D} \Big\{|| g_v(x_t, t) - \frac{x_1 - x_t}{1-t}||^2 \Big\}dp_{T_{\pi}}(x_t, x_1)dt + C''(\pi)
    \nonumber
\end{eqnarray}
Thus,
\begin{eqnarray}
    \KL{T_{\pi}}{S_v} = \underbrace{\KL{T_{\pi}}{\text{proj}_{\mathcal{M}}(T_{\pi})} + C'(T_{\pi})}_{\defeq C(\pi)} + 
    \nonumber
    \\
    \frac{1}{2\epsilon} \int_{0}^1 \int_{\mathbb{R}^D} \int_{\mathbb{R}^D} \Big\{|| g_v(x_t, t) - \frac{x_1 - x_t}{1-t}||^2 \Big\}dp_{T_{\pi}}(x_t, x_1)dt = 
     \nonumber
    \\
    C(\pi) + \frac{1}{2\epsilon} \int_{0}^1 \int_{\mathbb{R}^D} \int_{\mathbb{R}^D} \Big\{|| g_v(x_t, t) - \frac{x_1 - x_t}{1-t}||^2 \Big\}dp_{T_{\pi}}(x_t, x_1)dt.
\end{eqnarray}
\end{proof}

\begin{proof}[Proof of Theorem \ref{thm:eqviv-lightsb}]
From Theorem~\ref{thm:tractable-objective-of-optimal-projection} it follows that:
\begin{eqnarray}
    \KL{T_{\pi}}{S_v} = C(\pi) + \frac{1}{2\epsilon} \int_{0}^1 \int_{\mathbb{R}^D} \int_{\mathbb{R}^D} \Big\{|| g_v(x_t, t) - \frac{x_1 - x_t}{1-t}||^2 \Big\}dp_{T_{\pi}}(x_t, x_1)dt.
\end{eqnarray}
In turn, from the proof of Theorem~\eqref{thm:optimal-projection} it holds that:
\begin{eqnarray}
    \KL{T_{\pi}}{S_v} = \widehat{C}(\pi) + \KL{\pi^*}{\pi^{S_{v}}}.
\end{eqnarray}
From the \citep[Propositon 3.1]{korotin2024light} it follows that $\KL{\pi^*}{\pi^{S}_{v}} = \mathcal{L}_0(v) - \mathcal{L}^*$, where $\mathcal{L}^*$ is a constant depending on distirbutions $p_0, p_1$ and value $\epsilon$.
Hence, we combine these two expressions and get
\begin{eqnarray}
    \frac{1}{2\epsilon} \int_{0}^1 \int_{\mathbb{R}^D} \int_{\mathbb{R}^D} \Big\{|| g_v(x_t, t) - \frac{x_1 - x_t}{1-t}||^2 \Big\}dp_{T_{\pi}}(x_t, x_1)dt = 
    \widetilde{C}(\pi) + \mathcal{L}_0(v),
    \nonumber
\end{eqnarray}
where $\widetilde{C}(\pi) \defeq \widehat{C}(\pi) - \mathcal{L}^* - C(\pi)$.

\end{proof}


\section{Experiments details and extra results}\label{app:solver-details}


We build our LightSB-M implementation upon LightSB official implementation  \url{https://github.com/ngushchin/LightSB}. All the parametrization, optimization and initialization details are the same as \cite{korotin2024light} if not stated otherwise. In the Mini-batch (MB) setting, discrete OT algorithm \texttt{ot.emd} is taken from POT library \cite{flamary2021pot}. The batch size is always 128.

\subsection{Qualitative 2D setup hyperparameters}\label{app:hyperparameters-for-2D}

We use $K=250$ potentials and Adam optimizer with $lr=10^{-3}$ in all the cases to train LightSB-M.

\subsection{Evaluation on Biological Single-cell Data.}\label{app:single-cell}
We follow the same setup as \citep{korotin2024light} and use their code and data from \url{https://github.com/ngushchin/LightSB}. All models are trained with $\epsilon=0.1$ if not stated otherwise. For completeness, we provide additional results of our solver trained with the independent plan with different values of the parameter $\epsilon$, see Table \ref{app:table-sc-comparison-eps}. 

\color{black}
\begin{table*}[h]
\vspace{-0mm}
\centering
\small
\begin{tabular}{ |c|c|c|c| }
\hline
\backslashbox{$\epsilon$}{\textbf{DIM}} & \textbf{50} & \textbf{100} & \textbf{1000} \\ 
\hline
 $0.3$ & $2.37 \pm 0.11$ & $2.169 \pm 0.11$  & $1.310 \pm 0.06$  \\
\hline
 $0.1$ & $2.347 \pm 0.11$ & $2.174 \pm 0.08$ & $1.35 \pm 0.05$ \\
\hline
 $0.03$ & $2.349 \pm 0.09$ & $2.32 \pm 0.09$ & $1.279 \pm 0.05$ \\
\hline
 $0.01$ & $2.404 \pm 0.12$ & $2.28 \pm 0.07$ & $1.309 \pm 0.04$ \\
\hline
\end{tabular}
\vspace{-2mm}
\captionsetup{justification=centering}
 \caption{Energy distance (averaged for two setups and 5 random seeds) on the MSCI dataset (\wasyparagraph\ref{sec:exp-single-cell}) along with $95\%$-confidence interval ($\pm$ intervals) for LightSB-M (ID).}
 \label{app:table-sc-comparison-eps}
\end{table*}

\subsection{Evaluation on the Schrodinger Bridge Benchmark}\label{app:benchmark}

Here we first provide an additional evaluation of solvers using target matching and dynamic metrics. Then we study the speed of inference in our LightSB-M solver using the Brownian bridge vs. using the Euler–Maruyama simulation.

\textbf{Target metric evaluation.} We additionally study how well each solver map initial distribution $p_0$ into $p_1$ by measuring the metric $\text{B}\mathbb{W}_{2}^{2}\text{-UVP}$ also proposed by the authors of the benchmark \citep[\wasyparagraph{4}]{gushchin2023building}. We present the results in Table~\ref{table-bwuvp-benchmark}. We observe that our method performs better than other bridge-matching approaches.

\begin{table*}[h]
\centering
\scriptsize
\setlength{\tabcolsep}{3pt}
\begin{tabular}{rrcccccccccccc}
\toprule
& & \multicolumn{4}{c}{$\epsilon=0.1$} & \multicolumn{4}{c}{$\epsilon=1$} & \multicolumn{4}{c}{$\epsilon=10$}\\
\cmidrule(lr){3-6} \cmidrule(lr){7-10} \cmidrule(l){11-14}
& Solver Type & {$D\!=\!2$} & {$D\!=\!16$} & {$D\!=\!64$} & {$D\!=\!128$} & {$D\!=\!2$} & {$D\!=\!16$} & {$D\!=\!64$} & {$D\!=\!128$} & {$D\!=\!2$} & {$D\!=\!16$} & {$D\!=\!64$} & {$D\!=\!128$} \\
\midrule
Best solver on benchmark$^\dagger$ & Varies & $0.016$ & $0.05$ & $0.25$ & $0.22$ & $0.005$ & $0.09$  & $0.56$ & $0.12$ & $0.01$ & $0.02$ & $0.15$ & $0.23$ \\ 
LightSB$^\dagger$ & KL minimization & $0.005$ & $0.017$ & $0.037$ & $0.069$ & $0.004$ & $0.01$ & $0.03$ & $0.07$ & $0.03$ & $0.04$ & $0.17$ & $0.30$ \\
\hline
DSBM & \multirow{5}{*}{Bridge matching} & $0.03$ & $0.18$ & $0.7$ & $2.26$ & $0.04$ & $0.09$ & $1.9$ & $7.3$ & $0.26$ & $102$ & $3563$ & $15000$ \\   
$\text{SF}^2$M-Sink &  &$0.04$ & $0.18$ & $0.39$ & $1.1$ & $0.07$ & $0.3$ & $4.5$ & $17.7$ & $0.17$ & $4.7$ & $316$ & $812$ \\   
LightSB-M (ID, \textbf{ours}) &  & $0.02$ & $\mathbf{0.03}$ & $\mathbf{0.2}$ & $\mathbf{0.46}$ & $0.005$ & $0.04$ & $\mathbf{0.11}$ & $0.27$ & $0.07$ & $\mathbf{0.03}$ & $0.11$ & $\mathbf{0.21}$ \\  
LightSB-M (MB, \textbf{ours}) &  & $\mathbf{0.005}$ & $0.07$ & $0.27$ & $0.63$ & $\mathbf{0.002}$ & $0.04$ & $0.12$ & $0.36$ & $0.04$ & $0.07$ & $0.11$ & $0.23$ \\   
LightSB-M (GT, \textbf{ours}) &  & $0.02$ & $\mathbf{0.03}$ & $0.21$ & $0.55$ & $0.011$ & $\mathbf{0.03}$ & $\mathbf{0.11}$ & $\mathbf{0.26}$ & $\mathbf{0.016}$ & $0.04$ & $\mathbf{0.09}$ & $\mathbf{0.21}$ \\
\bottomrule
\end{tabular}
\vspace{-2mm}
\captionsetup{justification=centering,font=small}
\caption{Comparisons of $\text{B}\mathbb{W}_{2}^{2}\text{-UVP}\downarrow$ (\%) between the ground truth target distribution $p_1$ and learned target distribution $\pi_{\theta}(x_1)$. The best metric over \textit{bridge matching} solvers is \textbf{bolded}. Results marked with $\dagger$ are taken from \cite{korotin2024light}.}
\label{table-bwuvp-benchmark}
\vspace{-2mm}
\end{table*}


\textbf{Dynamic metrics evaluation}. Following the authors of the benchmark paper \citep[Appendix F]{gushchin2023building}, we provide additional metrics for the learned dynamic of the Schrödinger Bridge. The authors of the benchmark measure forward $\text{KL}(T^*||S)$ and reversed $\text{KL}(S||T^*)$ divergences between the ground-truth process $T^*$ and the learned process $S$. To do so, they define two auxiliary values:
$$
\mathcal{L}^{2}_{\text{fwd}}[t] = \mathbb{E}_{x_t \sim T^*}\|g^{*}(x_t,t)-g_S(x_t,t)\|^{2}, \quad
\mathcal{L}^{2}_{\text{rev}}[t] = \mathbb{E}_{x_t \sim S}\|g^{*}(x_t,t)-g_S(x_t,t)\|^{2},
$$
and use the fact, that $\text{KL}(T^*||S) = \frac{1}{2\epsilon}\int_{0}^{1}\mathcal{L}^{2}_{\text{fwd}}[t]dt$ and $\text{KL}(S||T^*) = \frac{1}{2\epsilon}\int_{0}^{1}\mathcal{L}^{2}_{\text{rev}}[t]dt$.

The values of $\mathcal{L}^{2}_{\text{fwd}}[t]$ and $\mathcal{L}^{2}_{\text{rev}}[t]$ for the range of $t \in [0, 1]$ are plotted in Figure \ref{fig:kl_dyn_sb_bench}. We observe that LightSB-M has a lower error $\mathcal{L}^{2}_{\text{fwd}}[t]$ and $\mathcal{L}^{2}_{\text{rev}}[t]$ as well as $\text{KL}(T^*||S)$ and $\text{KL}(S||T^*)$ in approximating the ground-truth optimal drift $g^*(X_t, t)$ than other algorithm including DSBM \citep{shi2023diffusion} and SF2M \citep{tong2023simulation}. Values of $\text{KL}(T^*||S)$ and $\text{KL}(S||T^*)$ are given in the Table \ref{tab:kl_dyn_values_sb_bench}. Since $\mathcal{L}^{2}_{\text{fwd}}[t]$ and $\mathcal{L}^{2}_{\text{rev}}[t]$ are lower for most times for our algorithm, $\text{KL}(T^*||S)$ and $\text{KL}(S||T^*)$ are lower for our \textbf{LightSB-M} algorithm.

\begin{figure}[!h]
    \centering
    \includegraphics[width=0.955\linewidth]{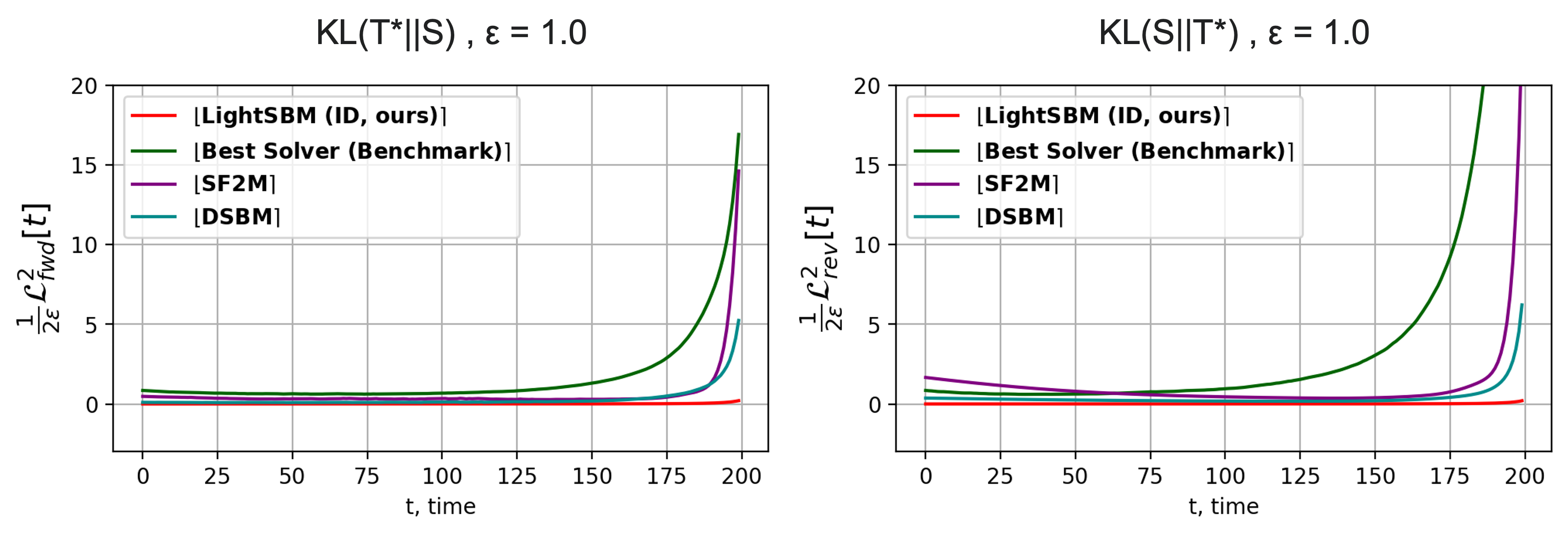}
    \captionsetup{justification=centering}
    \caption{Dynamic KL evaluation. $\mathcal{L}^{2}_{\text{fwd}}[t]$ and $\mathcal{L}^{2}_{\text{bwd}}[t]$ values w.r.t. time for different algorithms. Results denoted as "Best solver (benchmark)" are taken from the benchmark paper \citep{gushchin2023building}}
    \label{fig:kl_dyn_sb_bench}
\end{figure}

\begin{table}[!h]
    \centering
    \begin{tabular}{|c|c|c|c|c|}
        \hline
        Solver & LightSBM (ID, ours) & Best solver (benchmark) & SF2M & DSBM \\
        \hline
        $\text{KL}(T^*||S)$ & 0.0093 & 1.64 & 0.6422 & 0.2950 \\
        \hline
        $\text{KL}(S\||T^*)$ & 0.0099 & 49.65 & 1.0765 & 0.39 \\
        \hline
    \end{tabular}
    \captionsetup{justification=centering}
    \caption{Dynamic KL values for different algorithms. Results denoted as "Best solver (benchmark)" are taken from the benchmark paper \citep{gushchin2023building}.}
    \label{tab:kl_dyn_values_sb_bench}
\end{table}

\textbf{Study of the efficiency of the sampling.} Here we measure the performance of sampling (time and $\text{cB}\mathbb{W}_{2}^{2}\text{-UVP}\downarrow$) directly from the learned plan $\pi_{\theta}(x_1|x_0)$ versus sampling by the Euler-Maruyama algorithm \citep[\wasyparagraph{9.2}]{kloeden1992stochastic} and using the drift function $g_{\theta}(x_t, t)$. We conduct our experiments on the benchmark setup with $\epsilon=0.1$ and $D=16$. We present our results in the Table \ref{tab:plan_vs_sde_sampling_time} and Table \ref{tab:plan_vs_sde_sampling_cbwuvp}:

\begin{table}[!h]
    \centering
    \begin{tabular}{|c|c|}
        \hline
        \textbf{Inference type} & \textbf{Time} \\
        \hline
        Euler-Maruyama, 3 steps & 0.046 $\pm$ 0.053 sec \\
        \hline
        Euler-Maruyama, 10 steps & 0.19 $\pm$ 0.14 sec \\
        \hline
        Euler-Maruyama, 30 steps & 0.365 $\pm$ 0.08 sec \\
        \hline
        Euler-Maruyama, 100 steps & 1.268 $\pm$ 0.3 sec \\
        \hline
        Euler-Maruyama, 300 steps & 3.931 $\pm$ 0.34 sec \\
        \hline
        Euler-Maruyama, 1000 steps & 12.61 $\pm$ 1.32 sec \\
        \hline
        Sampling from the plan $\pi_{\theta}$ & 0.00058 $\pm$ 0.0001 sec \\
        \hline
    \end{tabular}
    \captionsetup{justification=centering}
    \caption{Time measurements for LightSB-M sampling using the SDE approach (Euler-Maruyama) and direct sampling from the plan $\pi_{\theta}$ on SB Benchmark \citep{gushchin2023building} with $\epsilon=0.1$ and $D=16$. The number of steps for Euler-Maruyama is the number of SDE solver discretization steps. Results are averaged over 5 runs with std provided after $\pm$.}
    \label{tab:plan_vs_sde_sampling_time}
\end{table}

\begin{table}[!h]
    \centering
    \begin{tabular}{|c|c|}
        \hline
        Inference type & $\text{cB}\mathbb{W}_{2}^{2}\text{-UVP}\downarrow$ \\
        \hline
        Euler-Maruyama, 10 steps & 1.53  \\
        \hline
        Euler-Maruyama, 50 steps & 0.22  \\
        \hline
        Euler-Maruyama, 100 steps & 0.126 \\
        \hline
        Euler-Maruyama, 200 steps & 0.102  \\
        \hline
        Euler-Maruyama, 500 steps & 0.09  \\
        \hline
        Sampling from the plan $\pi_{\theta}$ & 0.09  \\
        \hline
    \end{tabular}
    \captionsetup{justification=centering}
    \caption{$\text{cB}\mathbb{W}_{2}^{2}\text{-UVP}\downarrow$ measurements for LightSB-M sampling using SDE approach (Euler-Maruyama) and sampling from the plan $\pi_{\theta}$ on SB Benchmark \citep{gushchin2023building} with $\epsilon=0.1$ and $D=16$. Number of steps for Euler-Maruyama means number of SDE solver discretization steps.}
    \label{tab:plan_vs_sde_sampling_cbwuvp}
\end{table}

As we can see from the obtained results, the Euler-Maruyama approach requires up to 500 steps to accurately solve the Schrödinger Bridge SDE. Thanks to the special form of the SDE provided by the used parametrization, we can directly sample from $\pi_{\theta}(x_1|x_0)$. This is orders of magnitude faster than the full simulation of the trajectories.

\subsection{Evaluation on unpaired image-to-image translation.}\label{sec-details-alae}

We follow the same setup as \citep{korotin2024light} and use their code and data from \url{https://github.com/ngushchin/LightSB}. All models are trained with $\epsilon=0.1$ if not stated otherwise.

According to \cite{korotin2024light} we first split the FFHQ data into train (first 60k) and test (last 10k) images. Then we create subsets of \textit{males}, \textit{females}, \textit{children} and \textit{adults} in both train and test subsets. For training we first use the ALAE encoder to extract $512$ dimensional latent vectors for each image and then train our solver on the extracted latent vectors. At the inference stage, we first extract the latent vector from the image, translate it by LightSB-M, and then decode the mapped vector to produce the mapped image. In Figure~\ref{fig:additiona-exmp}, we provide extra examples for our LigthSB-M and other baselines.

\textbf{Test FID values}. The FID values for our \textit{man} $\rightarrow$ \textit{woman} FFHQ image translation setup are provided in Table \ref{tab:fid_alae}. We measure FID values between decoded translated latents and encoded-decoded true images from the FFHQ dataset. For all considered solvers, we use the same value of the coefficient $\epsilon = 0.1$, which produces moderate diversity in the generated images. The FID values are similar for all methods, which align with the good quality of images given in Figure \ref{fig:alae}.

\begin{table}[h]
    \centering
    \begin{tabular}{|c|c|c|c|c|}
    \hline
        Solver & LightSBM (ID, \textbf{ours}) & LightSBM (MB, \textbf{ours}) & DSBM & SF$^2$M-Sink\\
        \hline
        FID & 0.852 & 0.859 & 0.859 & 0.8613\\
        \hline
    \end{tabular}
    \caption{FID values on unpaired \textit{man} $\rightarrow$ \textit{woman} translation for different solvers applied in the latent space of ALAE \citep{pidhorskyi2020adversarial} for 1024x1024 FFHQ images. \citep{karras2019style}}
    \label{tab:fid_alae}
\end{table}

\textbf{Different values of $\epsilon$.} We provide extra \textit{male} $\rightarrow$ \textit{female} results for a wide range of values $\epsilon \in \{0.01, 0.1, 1, 10\}$) in the Figure \ref{fig:appx_lightsbm_diversity} below. We observe that our solver shows the expected behavior by providing more diversity for larger $\epsilon$. 


\subsection{Baselines}\label{app:baselines}

\textbf{DSBM} \citep{shi2023diffusion}. Implementation is taken from official repo
\begin{center}\url{https://github.com/yuyang-shi/dsbm-pytorch}
\end{center}
For forward and backward drift approximations, instead of those used in the official repository, we use MLP neural networks with positional encoding as they give better results. Number of inner gradient steps for Markovian Fitting Iteration is $10000$, number of  Markovian Fitting Iterations is $10$. Adam optimizer \cite{kingma2014adam} with $lr=10^{-4}$ is used for optimization. 
\textbf{$\text{SF}^2$M-Sink} \citep{tong2023simulation}. Implementation is taken from official repo 
\begin{center}
\url{https://github.com/atong01/conditional-flow-matching}
\end{center}
For drift and score function approximation, we use MLP neural networks with positional encoding instead of those used in the official repository, as they give better results. Number of gradient updates $50000$ for SB benchmark and Single-cell Data experiments and $20000$ for unpaired image-to-image translation. Adam optimizer \cite{kingma2014adam} with $lr=10^{-4}$ is used for optimization.
\textbf{LightSB}'s results are taken from the paper \cite{korotin2024light}.

\section{Neural Network parametrization}\label{app:neural-network-parametrization}

Our LightSB-M is based on the Gaussian mixture parametrization. However, it is not the only way to implement optimal projection in practice. Here, we additionally propose a method to use neural network parametrization. We call this modification of our algorithm Hard Schrödinger Bridge Matching, or \textbf{HardSB-M}.

To begin with, we discuss another type of Schrödinger potential for better clarity. In the main text, we utilize the more convenient adjusted Schrödinger potential $v$ since it simplifies the usage of Gaussian Mixture approximation. However, in the literature, the more popular way is the usage of the Schrödinger potential $\varphi$ \citep[Eq. 4.11]{chen2021stochastic}, which has a one-to-one correspondence with the adjusted Schrödinger potential $v$:
\begin{equation}
    \varphi(x, t=1) = \varphi(x) = v(x)\exp(\frac{||x||^2}{2\epsilon}).
\end{equation}
Furthermore, the drift $g(x, t)$ of the Schrödinger Bridge with potential $\varphi(x)$ is given by:
\begin{eqnarray}
    g(x_t, t) = \epsilon \nabla_{x_t}\! \log \!\int_{\mathbb{R}^{D}}\!\!\!\! \mathcal{N}(x'|x_t, (1-t)\epsilon I_{D})\exp\!\big(\frac{\|x'\|^{2}}{2\epsilon}\big) v(x') dx' = 
    \nonumber
    \\
    \epsilon \nabla_{x_t}\! \log \!\int_{\mathbb{R}^{D}}\!\!\!\! \mathcal{N}(x'|x_t, (1-t)\epsilon I_{D}) \varphi(x') dx'.
    \label{eq:not-conv-with-gaussian-mixture}
\end{eqnarray}

Below we use this non adjusted Schrödinger potential and denote $\varphi(x, t=1)$ as $\varphi(x)$ to make derivations more concise. We parametrize this potential by a neural network $\varphi_{\theta}(x)$ and use \eqref{eq:not-conv-with-gaussian-mixture} to derive the drift $g_{\theta}$ given by $\varphi_{\theta}(x)$.


\subsection{Drift Estimation}

In the case of neural parametrization of $\varphi_{\theta}$ (or $v_{\theta}$), computation of drift $g_{\theta}(x_t, t)$ \eqref{eq:not-conv-with-gaussian-mixture} becomes a non-trivial task since it is no longer a convolution of a Gaussian mixture with a Gaussian distribution. We propose two ways to tackle this issue.

\textbf{\underline{Variant 1.} Monte Carlo (MC) estimator}. \label{app:hardsbm_mc_estimator}
First, we recap SB drift expression \eqref{eq:not-conv-with-gaussian-mixture} which for parametrized Schrödinger potential $\varphi_{\theta}$ states that the drift $g_{\theta}(x_t, t)$ is given by:
$$
    g_{\theta}(x_t, t) = \epsilon \nabla_{x_t}\! \log \!\int_{\mathbb{R}^{D}}\!\!\!\! \mathcal{N}(x'|x_t, (1-t)\epsilon I_{D}) \varphi_{\theta}(x') dx'
$$
By using the reparametrization trick (introducing $x' \defeq z\sqrt{(1-t)\epsilon} + x_t$), we get
\begin{eqnarray}
g_{\theta}(x_t, t) = \epsilon \nabla_{x_t}\! \log \!\int_{\mathbb{R}^{D}}\!\!\!\! \mathcal{N}(z|0, I_{D}) \varphi_{\theta}(z\sqrt{(1-t)\epsilon} + x_t)dz =
\nonumber
\\
\epsilon \frac{\!\int_{\mathbb{R}^{D}} \nabla_{x_t}  \big( \varphi_{\theta}(z\sqrt{(1-t)\epsilon} + x_t)\big) \mathcal{N}(z|0, I_{D})dz}{\int_{\mathbb{R}^{D}} \big( \varphi_{\theta}(z\sqrt{(1-t)\epsilon} + x_t)\big) \mathcal{N}(z|0, I_{D})dz} 
= \epsilon \frac{\mathbb{E}_{z \sim \mathcal{N}(z|0, I_{D})}\big[ \nabla_{x_t}  \big( \varphi_{\theta}(z\sqrt{(1-t)\epsilon} + x_t)\big) \big]}{\mathbb{E}_{z \sim \mathcal{N}(z|0, I_{D})} \big[ \big( \varphi_{\theta}(z\sqrt{(1-t)\epsilon} + x_t)\big) \big]}.
\nonumber
\end{eqnarray}

Then we can estimate $g_{\theta}(x_t, t)$ just by drawing samples $\{z\}_{n=1}^N$ and $\{z\}_{m=1}^M$ from $\mathcal{N}(z|0, I_{D})$ and using:

$$
g_{\theta}(x_t, t) \approx \frac{\frac{1}{N}\sum_{n=1}^N  \nabla_{x_t}  \big( \varphi_{\theta}(z_n\sqrt{(1-t)\epsilon} + x_t)\big)}{\frac{1}{M}\sum_{m=1}^M  \big( \varphi_{\theta}(z_n\sqrt{(1-t)\epsilon} + x_t)\big)}
$$

Calculation of the gradient of loss $\KL{T_{\pi}}{S_{\varphi_{\theta}}}$ given by \eqref{eq:mse-optimal-projection} w.r.t.\ the parameters is straightforward using auto-differentiation software.

\textbf{\underline{Variant 2.} Monte Carlo Markov Chain (MCMC) estimator}.  \label{app:hardsbm_mcmc_estimator}
The MC estimator proposed above is biased. We also suggest a non-biased estimator based on sampling from the unnormalized density below.

\begin{theorem}[\textbf{HardSB-M drift expression}] \label{th:hardsbm_drift}
    The drift $g(x, t)$ for the Schrodinger potential $\varphi(x)$ is given by:
    \begin{equation}
        g(x_t, t) = \frac{1}{1-t}\big(\mathbb{E}_{x' \sim p^{\varphi}(x'|x_t)}[x'] - x_t\big),
        \label{eq:mcmc-drift-estimator}
    \end{equation}
    where $p^{\varphi}(x'|x_t) \propto \exp{(-\frac{\Vert x' - x_t \Vert^2}{2\epsilon (1-t)})}\varphi(x')$.
\end{theorem}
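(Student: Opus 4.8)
The plan is to prove the identity by differentiating the log-integral expression for the drift in \eqref{eq:not-conv-with-gaussian-mixture} directly and recognizing the resulting ratio of integrals as a posterior expectation. I would start from
\[
g(x_t, t) = \epsilon\, \nabla_{x_t} \log Z(x_t, t), \qquad Z(x_t, t) \defeq \int_{\mathbb{R}^{D}} \mathcal{N}(x'|x_t, (1-t)\epsilon I_D)\, \varphi(x')\, dx',
\]
so that $g = \epsilon\, \nabla_{x_t} Z / Z$. The key elementary computation is the gradient of the Gaussian kernel in its mean argument: since $\mathcal{N}(x'|x_t, (1-t)\epsilon I_D) \propto \exp\big({-}\|x'-x_t\|^2 / (2(1-t)\epsilon)\big)$ with an $x_t$-independent normalizing constant, one has $\nabla_{x_t}\mathcal{N}(x'|x_t, (1-t)\epsilon I_D) = \tfrac{x'-x_t}{(1-t)\epsilon}\,\mathcal{N}(x'|x_t, (1-t)\epsilon I_D)$.

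First I would differentiate under the integral sign to obtain
\[
\nabla_{x_t} Z(x_t, t) = \int_{\mathbb{R}^{D}} \frac{x'-x_t}{(1-t)\epsilon}\, \mathcal{N}(x'|x_t, (1-t)\epsilon I_D)\, \varphi(x')\, dx'.
\]
Dividing by $Z$ and multiplying by $\epsilon$, the factor $\epsilon/\big((1-t)\epsilon\big)$ collapses to $1/(1-t)$, leaving
\[
g(x_t, t) = \frac{1}{1-t}\cdot \frac{\int_{\mathbb{R}^{D}} (x'-x_t)\, \mathcal{N}(x'|x_t, (1-t)\epsilon I_D)\, \varphi(x')\, dx'}{\int_{\mathbb{R}^{D}} \mathcal{N}(x'|x_t, (1-t)\epsilon I_D)\, \varphi(x')\, dx'}.
\]

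Finally I would note that $\mathcal{N}(x'|x_t, (1-t)\epsilon I_D)\,\varphi(x') \propto \exp\big({-}\|x'-x_t\|^2/(2\epsilon(1-t))\big)\,\varphi(x')$ up to an $x'$-independent constant, so the ratio above is exactly the expectation of $x'-x_t$ under the normalized density $p^{\varphi}(x'|x_t) \propto \exp\big({-}\|x'-x_t\|^2/(2\epsilon(1-t))\big)\varphi(x')$ from the statement. Pulling the constant $x_t$ out of the expectation gives $g(x_t, t) = \frac{1}{1-t}\big(\mathbb{E}_{x'\sim p^{\varphi}(\cdot|x_t)}[x'] - x_t\big)$, which is \eqref{eq:mcmc-drift-estimator}. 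The only non-algebraic point is justifying the interchange of gradient and integral, which I expect to be the main (though mild) obstacle: it requires a local domination bound on $\nabla_{x_t}\big[\mathcal{N}(x'|x_t,(1-t)\epsilon I_D)\varphi(x')\big]$ uniform in a neighborhood of $x_t$, supplied by the Gaussian tail together with the standing integrability assumptions on $\varphi$ (equivalently, finiteness of $c_v$ and its derivatives for the adjusted potential $v$) that make $g$ well defined in the first place.
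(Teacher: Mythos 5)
Your proposal is correct and follows essentially the same route as the paper's proof: both expand $\epsilon\nabla_{x_t}\log Z$ as $\epsilon\nabla_{x_t}Z/Z$, differentiate under the integral so the Gaussian kernel contributes the factor $(x'-x_t)/((1-t)\epsilon)$ (the paper phrases this as the log-derivative trick), and then recognize the ratio as an expectation under $p^{\varphi}(x'|x_t)$. Your explicit remark about justifying the interchange of gradient and integral is a point the paper passes over silently, but otherwise the arguments coincide.
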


    
    

    

To estimate drift by Theorem \ref{th:hardsbm_drift}, one needs to sample from unnormalized density $p^{\varphi}(x'|x_t)$. 
To do this, one may use the standard Unadjusted Langevin Algorithm (ULA), also known as just Langevin Dynamics \cite{LangevinRbertsTweedie96}.

\subsection{Loss Gradient Estimation}

To optimize the objective \eqref{eq:mse-optimal-projection}, one needs to compute its gradient w.r.t.\ the parameters $\theta$ which also involves $\nabla_{\theta} g_{\theta}(x_t, t)$. With the MC estimator for $g_{\theta}(x_t, t)$, the gradient $\nabla_{\theta} g_{\theta}(x_t, t)$ is trivially computed using automatic differentiation. However, with the MCMC estimator proposed in Theorem~\ref{th:hardsbm_drift} the way to compute $\nabla_{\theta} g_{\theta}(x_t, t)$ is not trivial. We propose an unbiased gradient of loss estimator via sampling from unnormalized density. For this, we need the following theorem.

\begin{theorem}[\textbf{HardSB-M loss gradient expression}] \label{th:hardsbm_loss_grad}
The gradient of \eqref{eq:mse-optimal-projection} for the Schrodinger potential $\varphi_{\theta}(x)$ is given by:
    $$
        \frac{1}{\epsilon} \int_0^1 \int_{\mathbb{R}^D \times \mathbb{R}^D} \Big\{(\nabla_{\theta} g_{\theta}(x_t, t))^{\top}(g_{\theta}(x_t, t) - \frac{x_1 - x_t}{1 - t})  \Big\} dp_{T_{\pi}}(x_t, x_1)dt,
    $$
where
    $$\nabla_{\theta} g_{\theta}(x_t, t) = \frac{1}{1-t}\nabla_{\theta} \mathbb{E}_{p^{\varphi_{\theta}}(x'|x_t)}[x'].$$
In turn, $\nabla_{\theta}\mathbb{E}_{p^{\varphi_{\theta}}(x_t)}[x']$ can be computed via
$$\nabla_{\theta} \mathbb{E}_{p^{\varphi_{\theta}}(x'|x_t)}[x'] = \mathbb{E}_{p^{\varphi_{\theta}}(x'|x_t)} \Big[ x' \{ \nabla_{\theta} \log \varphi_{\theta}(x') - \mathbb{E}_{x'' \sim p^{\varphi_{\theta}}(x''| x_t)}[\nabla_{\theta} \log \varphi_{\theta}(x'')] \} \Big].$$
\end{theorem}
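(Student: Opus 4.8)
The plan is to establish the three displayed identities one after another, each feeding into the next. For the outermost gradient I would start from the tractable objective of Theorem~\ref{thm:tractable-objective-of-optimal-projection}, namely $\KL{T_{\pi}}{S_{v}} = C(\pi) + \frac{1}{2\epsilon}\int_{0}^{1}\int_{\mathbb{R}^{D}\times\mathbb{R}^{D}} \|g_{\theta}(x_t,t) - \frac{x_1-x_t}{1-t}\|^{2}\, dp_{T_{\pi}}(x_t,x_1)\,dt$, where $g_{\theta}$ is the drift induced by the parametrized potential $\varphi_{\theta}$. The decisive structural fact is that both $C(\pi)$ and the measure $p_{T_{\pi}}$ depend only on the input plan $\pi$ and not on $\theta$. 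Assuming enough regularity to differentiate under the time and space integrals, I would apply the chain rule to the integrand via $\nabla_{\theta}\|g_{\theta} - b\|^{2} = 2(\nabla_{\theta} g_{\theta})^{\top}(g_{\theta} - b)$ with $b \defeq \frac{x_1-x_t}{1-t}$; the factor $2$ cancels $\frac{1}{2\epsilon}$ and yields the first formula.

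The second identity is then immediate from the drift expression \eqref{eq:mcmc-drift-estimator} of Theorem~\ref{th:hardsbm_drift}: since $g_{\theta}(x_t,t) = \frac{1}{1-t}\big(\mathbb{E}_{p^{\varphi_{\theta}}(x'|x_t)}[x'] - x_t\big)$ and the point $x_t$ carries no dependence on $\theta$, differentiating gives $\nabla_{\theta} g_{\theta}(x_t,t) = \frac{1}{1-t}\nabla_{\theta}\mathbb{E}_{p^{\varphi_{\theta}}(x'|x_t)}[x']$.

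The substantive work is the third identity. Writing the Gibbs-type conditional as $p^{\varphi_{\theta}}(x'|x_t) = \tilde p_{\theta}(x')/Z_{\theta}$ with unnormalized density $\tilde p_{\theta}(x') \defeq \exp\big(-\frac{\|x'-x_t\|^{2}}{2\epsilon(1-t)}\big)\varphi_{\theta}(x')$ and $Z_{\theta} \defeq \int_{\mathbb{R}^{D}}\tilde p_{\theta}(x')\,dx'$, I would apply the log-derivative trick while carefully tracking the $\theta$-dependence of the normalizer. Because the Gaussian factor is $\theta$-free, $\nabla_{\theta}\log\tilde p_{\theta}(x') = \nabla_{\theta}\log\varphi_{\theta}(x')$, hence $\nabla_{\theta}\tilde p_{\theta} = \tilde p_{\theta}\,\nabla_{\theta}\log\varphi_{\theta}$ and $\nabla_{\theta} Z_{\theta} = Z_{\theta}\,\mathbb{E}_{p^{\varphi_{\theta}}}[\nabla_{\theta}\log\varphi_{\theta}]$. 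Differentiating the quotient $\mathbb{E}_{p^{\varphi_{\theta}}}[x'] = \frac{1}{Z_{\theta}}\int x'\tilde p_{\theta}\,dx'$ then produces $\mathbb{E}_{p^{\varphi_{\theta}}}[x'\,\nabla_{\theta}\log\varphi_{\theta}] - \mathbb{E}_{p^{\varphi_{\theta}}}[x']\,\mathbb{E}_{p^{\varphi_{\theta}}}[\nabla_{\theta}\log\varphi_{\theta}]$, which I would repackage into the stated baseline-subtracted form $\mathbb{E}_{p^{\varphi_{\theta}}(x'|x_t)}\big[x'\{\nabla_{\theta}\log\varphi_{\theta}(x') - \mathbb{E}_{x''\sim p^{\varphi_{\theta}}(x''|x_t)}[\nabla_{\theta}\log\varphi_{\theta}(x'')]\}\big]$, using that the subtracted term is constant in $x'$ and so factors out of the expectation.

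I expect the only real obstacle to be technical rather than conceptual: justifying the several interchanges of $\nabla_{\theta}$ with integration (over time, over space, and inside the normalizer). This needs integrability and domination hypotheses on $\varphi_{\theta}$ and $\nabla_{\theta}\varphi_{\theta}$ guaranteeing that $Z_{\theta}$, $\mathbb{E}_{p^{\varphi_{\theta}}}[x']$, and $\mathbb{E}_{p^{\varphi_{\theta}}}[x'\,\nabla_{\theta}\log\varphi_{\theta}]$ are finite and that differentiation under the integral sign is licit. The conceptual heart is simply noticing that the derivative of the log-normalizer supplies exactly the mean-subtraction (variance-reducing \emph{baseline}) term; everything else is bookkeeping.
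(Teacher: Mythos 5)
Your proposal is correct and follows essentially the same route as the paper's proof: differentiate the tractable objective under the integral (the factor $2$ cancelling $\tfrac{1}{2\epsilon}$), pull the gradient through the drift formula of Theorem~\ref{th:hardsbm_drift} since $x_t$ is $\theta$-free, and obtain the final identity by the log-derivative trick applied to the Gibbs ratio $\tilde p_{\theta}/Z_{\theta}$, with $\nabla_{\theta}\log Z_{\theta}$ supplying exactly the baseline term. Your explicit quotient-rule bookkeeping and the paper's double application of $\nabla_{\theta} f_{\theta} = f_{\theta}\nabla_{\theta}\log f_{\theta}$ are the same computation, and your remark on justifying differentiation under the integral sign is a regularity point the paper leaves implicit.
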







     

To use this theorem in practice, we first estimate $g_{\theta}(x_t, t)$ by MCMC using \eqref{eq:mcmc-drift-estimator} from Theorem~\ref{th:hardsbm_drift}. Then, we estimate the gradient of the objective by using the Theorem~\ref{th:hardsbm_loss_grad}. At both stages, samples from $p^{\varphi_{\theta}}(x'|x_t)$ can be drawn, e.g., using the Unadjusted Langevin Algorithm \citep[ULA]{LangevinRbertsTweedie96}. 


\subsection{Inference after model training}

There are several inference approaches, e.g., \textbf{Energy-Based} and \textbf{SDE Based}. 

\textbf{Energy based inference.} We can sample directly from the EOT plan \eqref{eot-plan-characterization} using generic MCMC samplers similar to EgNOT \citep{mokrov2024energyguided}. After sampling the end point $x_1$ given the start point $x_0$, the trajectories can be infered using self similarity property \citep[\wasyparagraph{3.2}]{korotin2024light} of the Brownian Bridge $W^{\epsilon}_{|x_0,x_1}$.

\textbf{SDE based inference.} Given the way to estimate drift $g_{\theta}$ of the Schrödinger Bridge e.g. by MC (Appendix~\ref{app:hardsbm_mc_estimator}) or MCMC (Appendix~\ref{app:hardsbm_mcmc_estimator}) approaches, one can use any SDE solver to simulate trajectories. For example, one can use the simplest and most popular Euler-Maruyama scheme \citep[\wasyparagraph{9.2}]{kloeden1992stochastic}.

One can combine these approaches by sampling an MCMC proposal used for Energy Based inference via SDE simulation.

\subsection{Toy 2D experimental illustration}

We use the same setups as in \wasyparagraph\ref{sec:exp-2D} with $\epsilon \in \{0.03, 0.1, 1\}$ and provide results for MC estimation in Figure~\ref{fig:hardsbm_swiss_roll_mc} and for MCMC estimation in Figure~\ref{fig:hardsbm_swiss_roll_mcmc}. The Schrödinger potential $ \varphi_{\theta}(x) : \mathbb{R}^D \rightarrow \mathbb{R}_+$ is parametrized using $\exp(\text{NN}_{\theta})$, where $\text{NN}_{\theta}$ is a MLP. We test both MC and MCMC approaches.

\textbf{Hyperparameters.} For both MC and MCMC estimators, we use MLP with two hidden layers of widths $[256, 256]$ with \texttt{torch.nn.SiLU} activations as $\text{NN}_{\theta}$. During training, Adam optimizer \citep{kingma2014adam} with $lr=10^{-4}$ is used, batch size is $128$, model is trained for $10^5$ loss gradient updates. 

\textbf{MC estimator.} During training and inference, we use $1000$ MC samples. Inference is held with SDE simulation using 1000 Euler-Maruyama discretization steps for $\epsilon=1$ and $100$ Euler-Maruyama discretization steps for other $\epsilon$. Due to the necessity to compute $\exp(\text{NN}_{\theta})$ which may have very high values, we use double precision \texttt{torch.DoubleTensor} for all MC-related calculations.


\begin{figure*}[!t]
\begin{subfigure}[b]{0.245\linewidth}
\centering
\includegraphics[width=0.995\linewidth]{pics/LSBM_swiss_roll_gt.png}
\caption{\centering ${x\sim p_0
}$, ${y \sim p_1}.$}
\vspace{-1mm}
\end{subfigure}
\vspace{-1mm}\hfill\begin{subfigure}[b]{0.245\linewidth}
\centering
\includegraphics[width=0.995\linewidth]{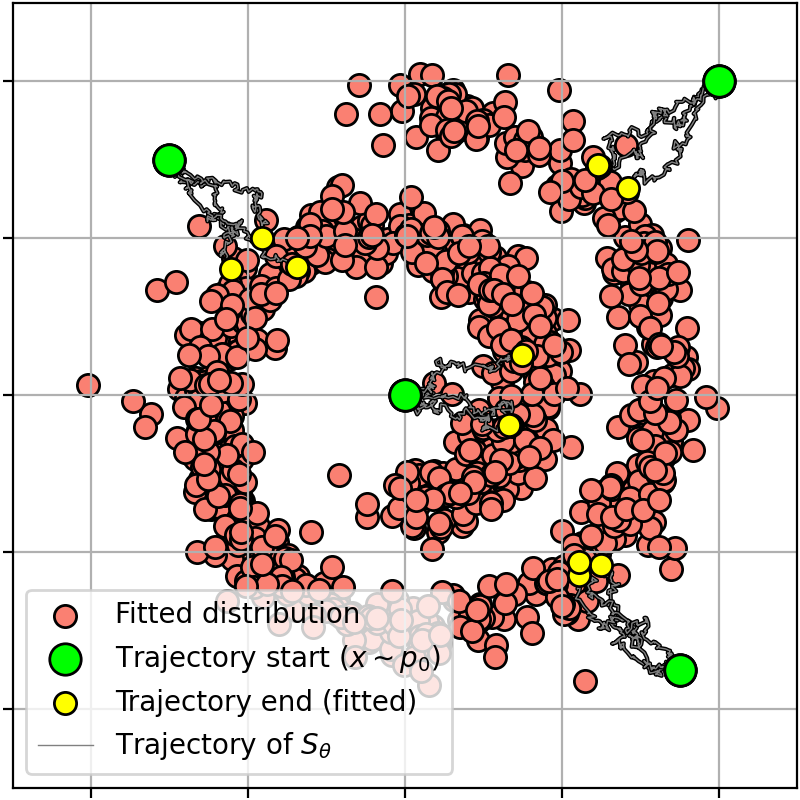}
\caption{\centering $\epsilon=0.03$.}
\vspace{-1mm}
\end{subfigure}
\hfill\begin{subfigure}[b]{0.245\linewidth}
\centering
\includegraphics[width=0.995\linewidth]{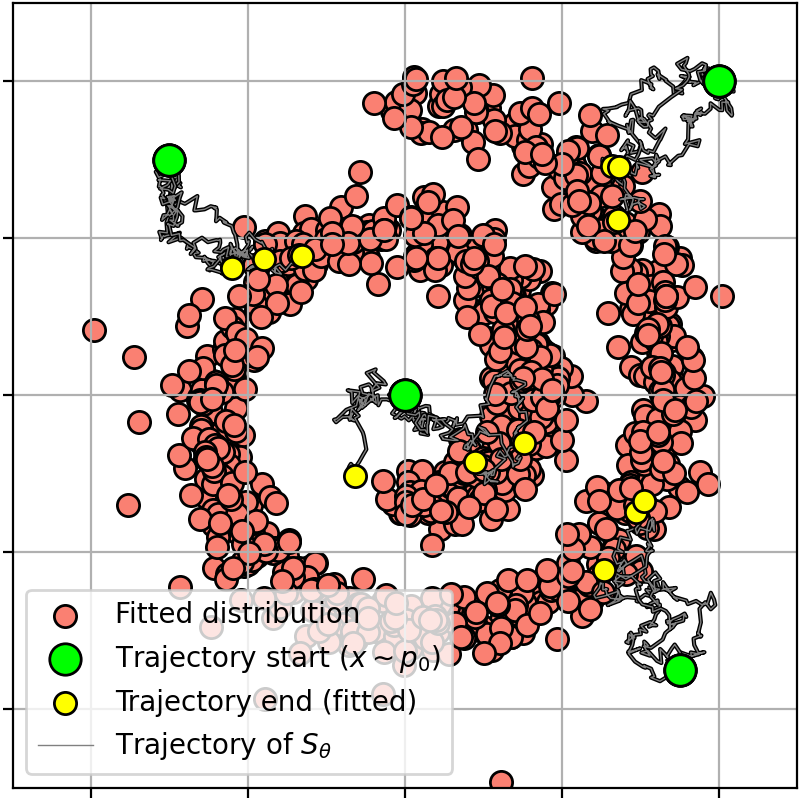}
\caption{\centering $\epsilon=0.1$.}
\vspace{-1mm}
\end{subfigure}
\hfill\begin{subfigure}[b]{0.245\linewidth}
\centering
\includegraphics[width=0.995\linewidth]{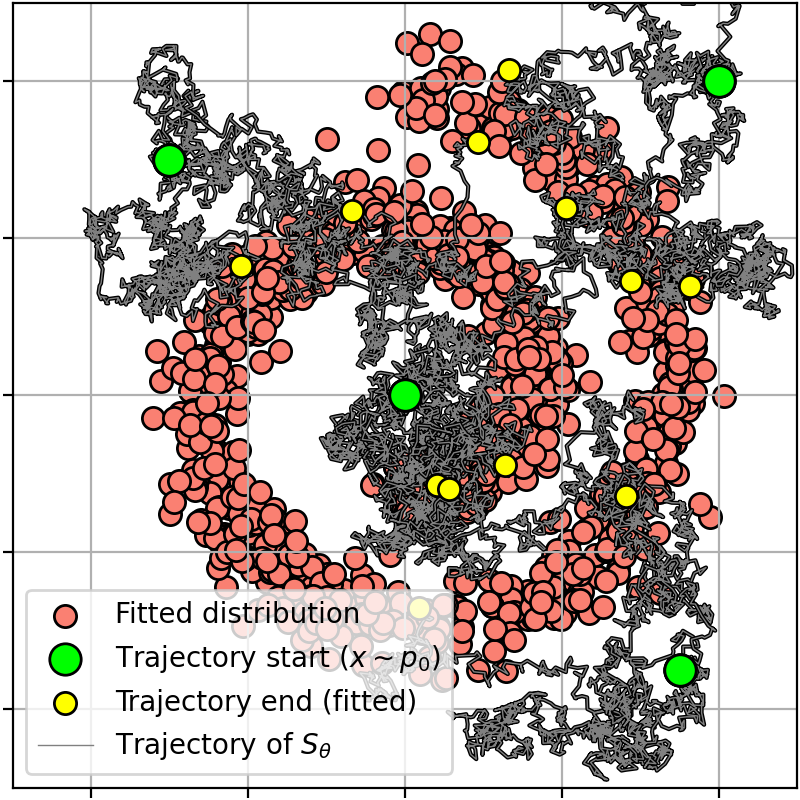}
\caption{\centering $\epsilon=1$.}
\vspace{-1mm}
\end{subfigure}
\vspace{-0mm} \caption{\centering The process $S_{\theta}$ learned by HardSB-M (\textbf{ours}) with MC drift estimator  \textit{Gaussian} $\!\rightarrow\!$ \textit{Swiss roll} example.}
 \label{fig:hardsbm_swiss_roll_mc}
\vspace{-3mm}
\end{figure*}

\begin{figure*}[!t]
\begin{subfigure}[b]{0.245\linewidth}
\centering
\includegraphics[width=0.995\linewidth]{pics/LSBM_swiss_roll_gt.png}
\caption{\centering ${x\sim p_0
}$, ${y \sim p_1}.$}
\vspace{-1mm}
\end{subfigure}
\vspace{-1mm}\hfill\begin{subfigure}[b]{0.245\linewidth}
\centering
\includegraphics[width=0.995\linewidth]{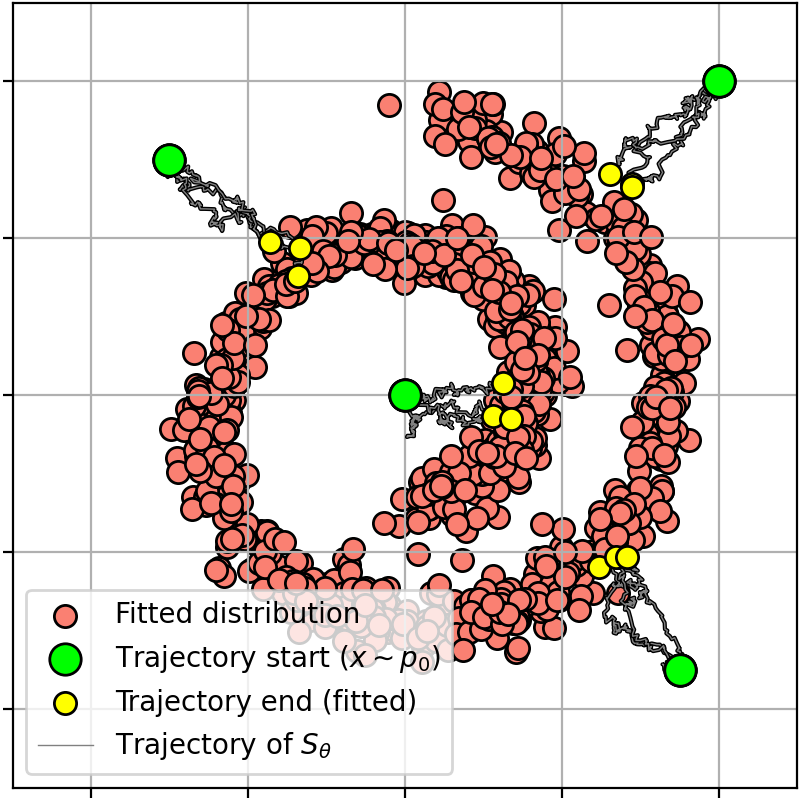}
\caption{\centering $\epsilon=0.03$.}
\vspace{-1mm}
\end{subfigure}
\hfill\begin{subfigure}[b]{0.245\linewidth}
\centering
\includegraphics[width=0.995\linewidth]{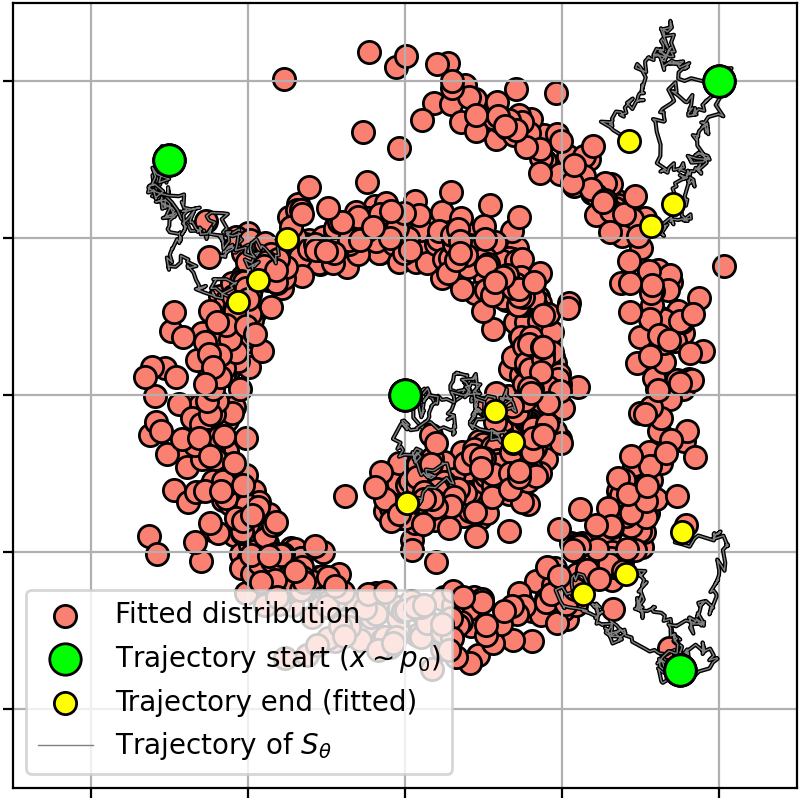}
\caption{\centering $\epsilon=0.1$.}
\vspace{-1mm}
\end{subfigure}
\hfill\begin{subfigure}[b]{0.245\linewidth}
\centering
\includegraphics[width=0.995\linewidth]{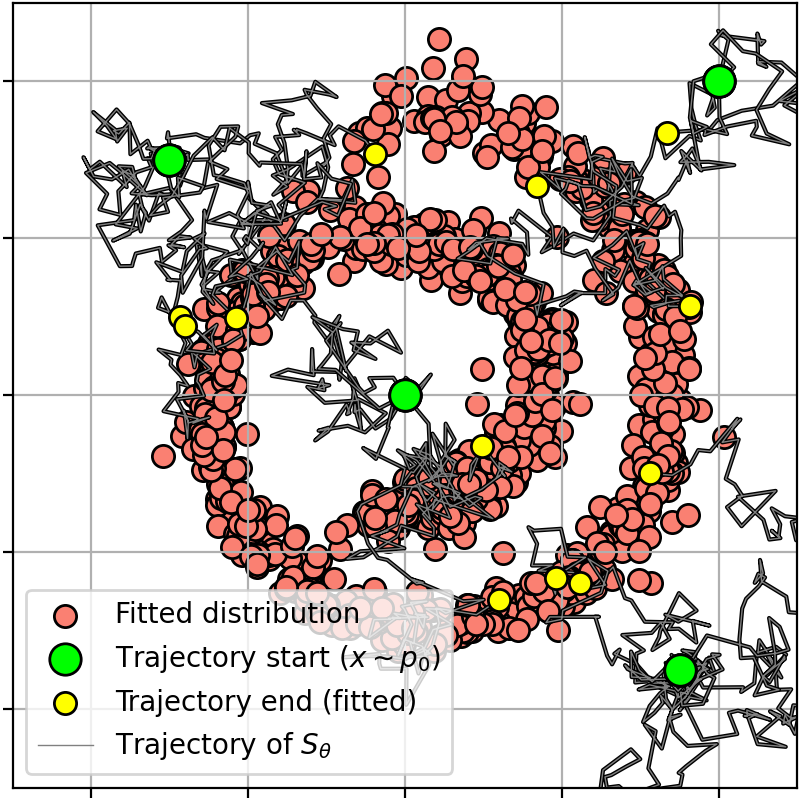}
\caption{\centering $\epsilon=1$.}
\vspace{-1mm}
\end{subfigure}
\vspace{-0mm} \caption{\centering The process $S_{\theta}$ learned by HardSB-M (\textbf{ours}) with MCMC drift estimator \textit{Gaussian} $\!\rightarrow\!$ \textit{Swiss roll} example.}
 \label{fig:hardsbm_swiss_roll_mcmc}
\vspace{-3mm}
\end{figure*}

\textbf{MCMC estimator.} During training and inference to estimate $g_{\theta}$ by \eqref{eq:mcmc-drift-estimator} we use $100$ samples drawn using Unadjusted Langevin Algorithm (ULA) with 50 steps and step size $\eta=0.0001$. The inference was performed in two steps: first, the SDE simulation was performed with $g_{\theta}$ estimation by ULA, and then the result was used as a proposal for energy-based sampling from the EOT plan, \eqref{eot-plan-characterization}. SDE simulaion was held using 100 Euler-Maruyama discretization steps (ULA settings are the same as for training) and Energy Based sampling from EOT plan using ULA with with 1000 steps and step size $\eta=10^{-4}$.

\subsection{Proofs}

\begin{proof}[Proof of Theorem \ref{th:hardsbm_drift}]
We denote by $Z_{x_t, (1-t)\epsilon} \defeq \int_{\mathbb{R}^{D}} \exp\big( -\frac{\Vert x' - x_t \Vert^2}{2(1-t)\epsilon } \big) dx'$ the normalization constant of the normal distribution $\mathcal{N}(x'|x_t, (1-t)\epsilon I_{D})$.
For a potential $\varphi_{\theta}(x)$, the corresponding drift $g(x_t, t)$ is given by \eqref{eq:not-conv-with-gaussian-mixture}:
\begin{eqnarray}
    g(x_t, t) = \epsilon \nabla_{x_t} \log \int_{\mathbb{R}^{D}} \mathcal{N}(x'|x_t, (1-t)\epsilon I_{D}) \varphi(x') dx'
    \nonumber
\end{eqnarray}
We will proceed with this equality to obtain an unbiased estimator. We proceed as follows:
\begin{eqnarray}
    g(x_t, t) = \epsilon \nabla_{x_t} \log \int_{\mathbb{R}^{D}} \mathcal{N}(x'|x_t, (1-t)) \varphi(x') dx' =
    \nonumber
    \\
    \epsilon \nabla_{x_t} \log \int_{\mathbb{R}^{D}} \frac{1}{Z_{x_t, (1-t)\epsilon}} \exp\big( -\frac{\Vert x' - x_t \Vert^2}{2(1-t)\epsilon } \big)  \varphi(x') dx'=
    \nonumber
    \\
    \epsilon \frac{\nabla_{x_t}  \int_{\mathbb{R}^{D}} \frac{1}{Z_{x_t, (1-t)\epsilon}} \exp\big( -\frac{\Vert x' - x_t \Vert^2}{2(1-t)\epsilon } \big)  \varphi(x') dx'}{\int_{\mathbb{R}^{D}} \frac{1}{Z_{x_t, (1-t)\epsilon}} \exp\big( -\frac{\Vert x'' - x_t \Vert^2}{2(1-t)\epsilon } \big)  \varphi(x'') dx''} =
    \nonumber
    \\
    \epsilon \frac{\nabla_{x_t}  \int_{\mathbb{R}^{D}} \exp\big( -\frac{\Vert x' - x_t \Vert^2}{2(1-t)\epsilon } \big)  \varphi(x') dx'}{\int_{\mathbb{R}^{D}}  \exp\big( -\frac{\Vert x'' - x_t \Vert^2}{2(1-t)\epsilon } \big)  \varphi(x'') dx''} =
    \nonumber
    \\
    \epsilon \frac{  \int_{\mathbb{R}^{D}} \nabla_{x_t} \Big\{ \exp\big( -\frac{\Vert x' - x_t \Vert^2}{2(1-t)\epsilon } \big)\Big\}  \varphi(x')  dx'}{\int_{\mathbb{R}^{D}}  \exp\big( -\frac{\Vert x'' - x_t \Vert^2}{2(1-t)\epsilon } \big)  \varphi(x'') dx''} = \Big[ \nabla_x f(x) = f(x) \nabla_x \log f(x) \Big] =
    \label{eq:supp-log-derivative-trick-3}
    \\
    \epsilon \int_{\mathbb{R}^{D}} \nabla_{x_t} \big\{ \frac{-\Vert x' - x_t \Vert^2}{2(1-t)\epsilon } \big\} \frac{  \exp\big( -\frac{\Vert x' - x_t \Vert^2}{2(1-t)\epsilon } \big)  \varphi(x') dx'}{\int_{\mathbb{R}^{D}}  \exp\big( -\frac{\Vert x'' - x_t \Vert^2}{2(1-t)\epsilon } \big)  \varphi(x'') dx''} = 
    \nonumber
    \\
    \Big[ p^{\varphi}(x'|x_t) \propto \exp{(-\frac{\Vert x' - x_t \Vert^2}{2\epsilon (1-t)})}\varphi(x')  \Big] =
    \nonumber
    \\
    \epsilon \int_{\mathbb{R}^{D}} \nabla_{x_t} \big\{ \frac{-\Vert x' - x_t \Vert^2}{2(1-t)\epsilon } \big\} p^{\varphi}(x'|x_t)dx' = 
    \nonumber
    \\
    \epsilon \mathbb{E}_{x' \sim p^{\varphi}(x'|x_t)}\big[ \frac{x' - x_t}{(1-t)\epsilon } \big] = \frac{1}{(1-t)} \big(\mathbb{E}_{x' \sim p^{\varphi}(x'|x_t)}[x'] - x_t \big).
\end{eqnarray}
In line \eqref{eq:supp-log-derivative-trick-3}, we use log-derivatve trick.
\end{proof}

\begin{proof}[Proof of Theorem \ref{th:hardsbm_loss_grad}]

We derive

\begin{eqnarray}
\nabla_\theta \frac{1}{\epsilon} \int_0^1 \int_{\mathbb{R}^D \times \mathbb{R}^D} \Vert g_{\theta}(x_t, t) - \frac{x_1 - x_t}{1 - t} \Vert^2 d p_{T_{\pi}(x_t, x_1)}dt =
\nonumber
\\
\frac{1}{\epsilon} \int_0^1 \int_{\mathbb{R}^D \times \mathbb{R}^D} \Big\{(\nabla_{\theta} g_{\theta}(x_t, t))^{\top}(g_{\theta}(x_t, t) - \frac{x_1 - x_t}{1 - t})  \Big\} d p_{T_{\pi}(x_t, x_1)} dt.
\end{eqnarray}

Now we recap the result of Theorem~\ref{th:hardsbm_drift}:
$
g_{\theta}(x, t) = \frac{1}{(1-t)} \big(\mathbb{E}_{x' \sim p^{\varphi_{\theta}}(x'|x_t)}[x'] - x_t \big).
$ Now we derive $\nabla_{\theta}g_{\theta}(x,t)$:
\begin{eqnarray}
\nabla_{\theta} g_\theta(x_t, t) =  \frac{1}{(1-t)} \big(\nabla_{\theta}\mathbb{E}_{x' \sim p^{\varphi_{\theta}}(x'|x_t)}[x'] \big) = \big[ Z(x_t, \varphi_{\theta}) \defeq \int_{\mathbb{R}^D} \exp\big( -\frac{\Vert x' - x_t \Vert^2}{2(1-t)\epsilon} \big)  \varphi_{\theta}(x') dx \big] =
    \nonumber
    \\
=  \frac{1}{(1-t)} \big( \nabla_\theta \int_{\mathbb{R}^D} x' \frac{\exp\big( -\frac{\Vert x' - x_t \Vert^2}{2(1-t)\epsilon } \big)  \varphi_{\theta}(x')}{Z(x_t, \varphi_\theta)} dx' \big) = 
    \nonumber
    \\
=  \frac{1}{(1-t)} \big(  \int_{\mathbb{R}^D} x' \nabla_\theta \big\{ \frac{\exp\big( -\frac{\Vert x' - x_t \Vert^2}{2(1-t)\epsilon } \big)  \varphi_{\theta}(x')}{Z(x_t, \varphi_\theta)}\big\}dx' \big)= \Big[ \nabla_\theta f_{\theta}(\cdot) = f_{\theta}(\cdot) \nabla_\theta \log f_{\theta}(\cdot) \Big] = 
   \label{eq:supp-log-der-trick-1}
    \\
=\frac{1}{(1-t)} \big( \nabla_\theta \mathbb{E}_{x' \sim p^{\varphi_{\theta}}(x'|x_t)} \big[ x' \nabla_\theta (-\frac{\Vert x' - x_t \Vert^2}{2(1-t)\epsilon } + \log \varphi_{\theta}(x') - \log Z(x_t, \varphi_\theta) )\big]\big) = 
    \nonumber
    \\
=\frac{1}{(1-t)} \big( \nabla_\theta \mathbb{E}_{x' \sim p^{\varphi_{\theta}}(x'|x_t)} \big[ x'( \nabla_\theta \big\{ \log \varphi_{\theta}(x') \big\}  - \nabla_\theta \log Z(x_t, \varphi_\theta) )\big]\big) = 
    \nonumber
    \\
=\frac{1}{(1-t)} \big( \nabla_\theta \mathbb{E}_{x' \sim p^{\varphi_{\theta}}(x'|x_t)} \big[ x' \nabla_\theta \big( \big\{ \log \varphi_{\theta}(x') \big\}  -  \frac{\int_{\mathbb{R}^D} \nabla_\theta \big( \exp \big\{ -\frac{\Vert x'' - x_t \Vert^2}{2\epsilon(1-t)} \big\} \varphi_{\theta}(x'') \big) dx'' }{Z(x_t, \varphi_\theta)} )\big]  \big) =  
    \nonumber
    \\
   =\Big[ \nabla_\theta f_{\theta}(\cdot) = f_{\theta}(\cdot) \nabla_\theta \log f_{\theta}(\cdot) \Big] =
    \label{eq:supp-log-der-trick-2}
    \\
=\frac{1}{(1-t)} \big( \nabla_\theta \mathbb{E}_{x' \sim p^{\varphi_{\theta}}(x'|x_t)} \big[ x'  \big( \nabla_\theta \big\{ \log \varphi_{\theta}(x') \big\}  -  \frac{\int_{\mathbb{R}^D} \exp \big\{ -\frac{\Vert x'' - x_t \Vert^2}{2\epsilon(1-t)} \big\} \varphi_{\theta}(x'') \nabla_\theta \log \varphi_{\theta}(x'')  dx'' }{Z(x_t, \varphi_\theta)}\big)\big] \big) =
    \nonumber
    \\
=\frac{1}{(1-t)} \big( \nabla_\theta \mathbb{E}_{x' \sim p^{\varphi_{\theta}}(x'|x_t)} \big[ x'  \big( \nabla_\theta \log \varphi_{\theta}(x')  -  \mathbb{E}_{x'' \sim p^{\varphi_{\theta}}(x''|x_t)} \big[ \nabla_\theta \log \varphi_{\theta}(x'') \big]\big)\big).
\end{eqnarray}
In lines \eqref{eq:supp-log-der-trick-1} and \eqref{eq:supp-log-der-trick-2}, we use the log-derivative trick.
\end{proof}
\clearpage

\begin{figure*}[!t]
\begin{subfigure}[b]{0.99 \linewidth}
\centering
\includegraphics[width=0.995\linewidth]{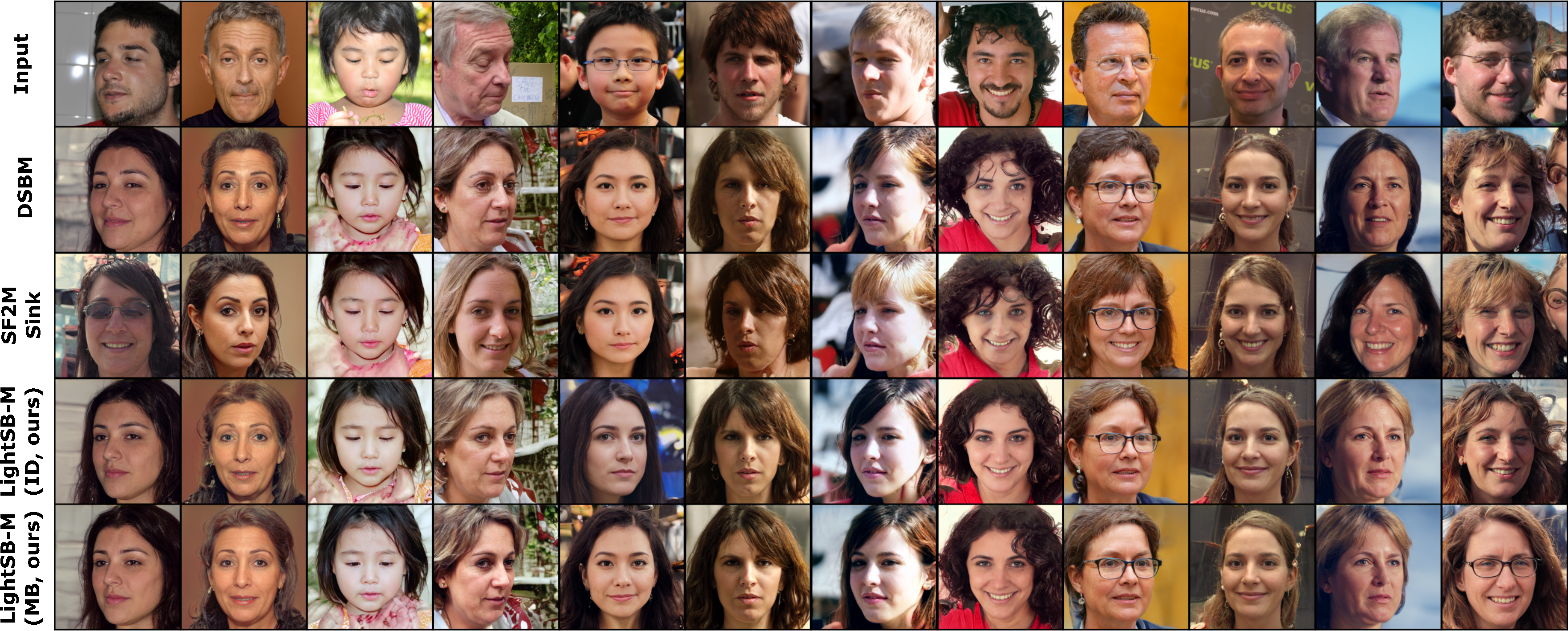}
\caption{\vspace{-1mm} \centering \textit{Man} $\rightarrow$ \textit{Woman}.}
\vspace{-1mm} 
\end{subfigure}
\vskip\baselineskip
\begin{subfigure}[b]{0.99 \linewidth}
\centering
\includegraphics[width=0.995\linewidth]{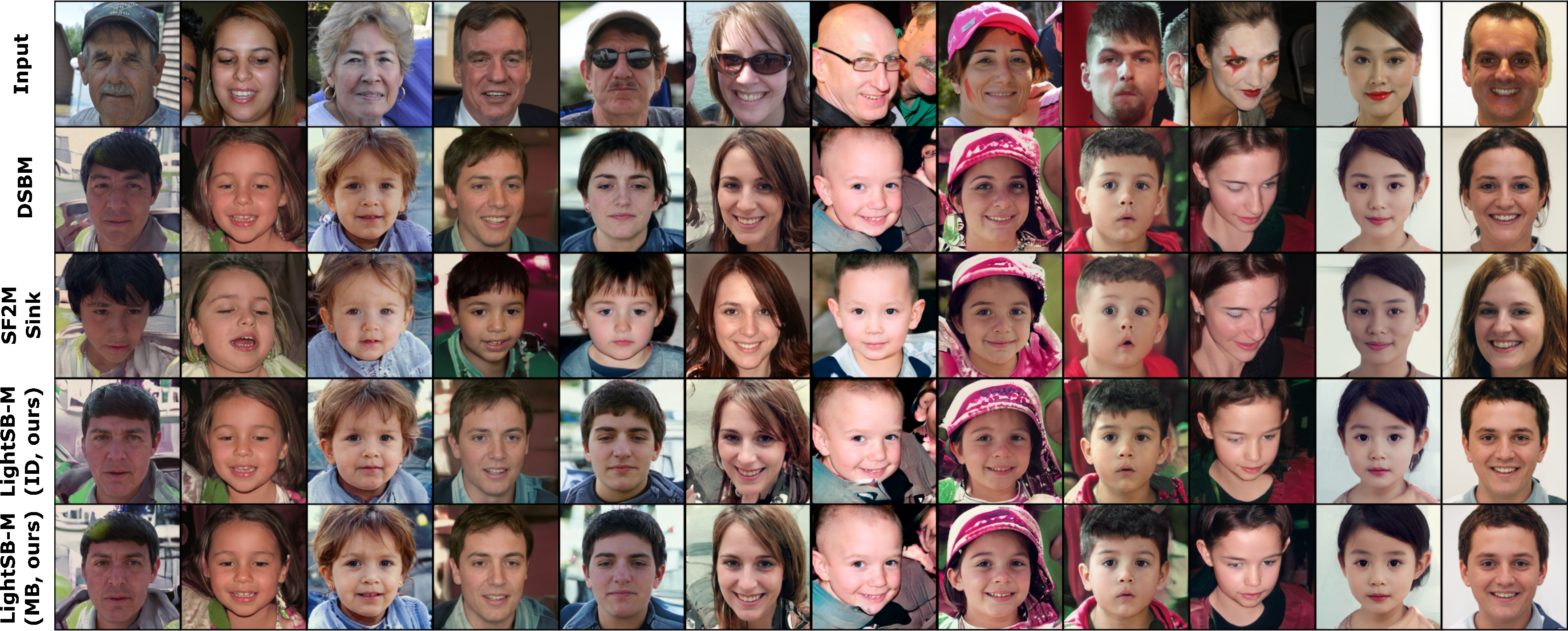}
\caption{\vspace{-1mm} \centering \textit{Adult} $\rightarrow$ \textit{Child}.}
\vspace{-1mm} 
\end{subfigure}
\caption{\centering Additional examples of image-to-image translation.}\label{fig:additiona-exmp}
\end{figure*}

\begin{figure}
    \centering
    \includegraphics[width=0.90\linewidth]{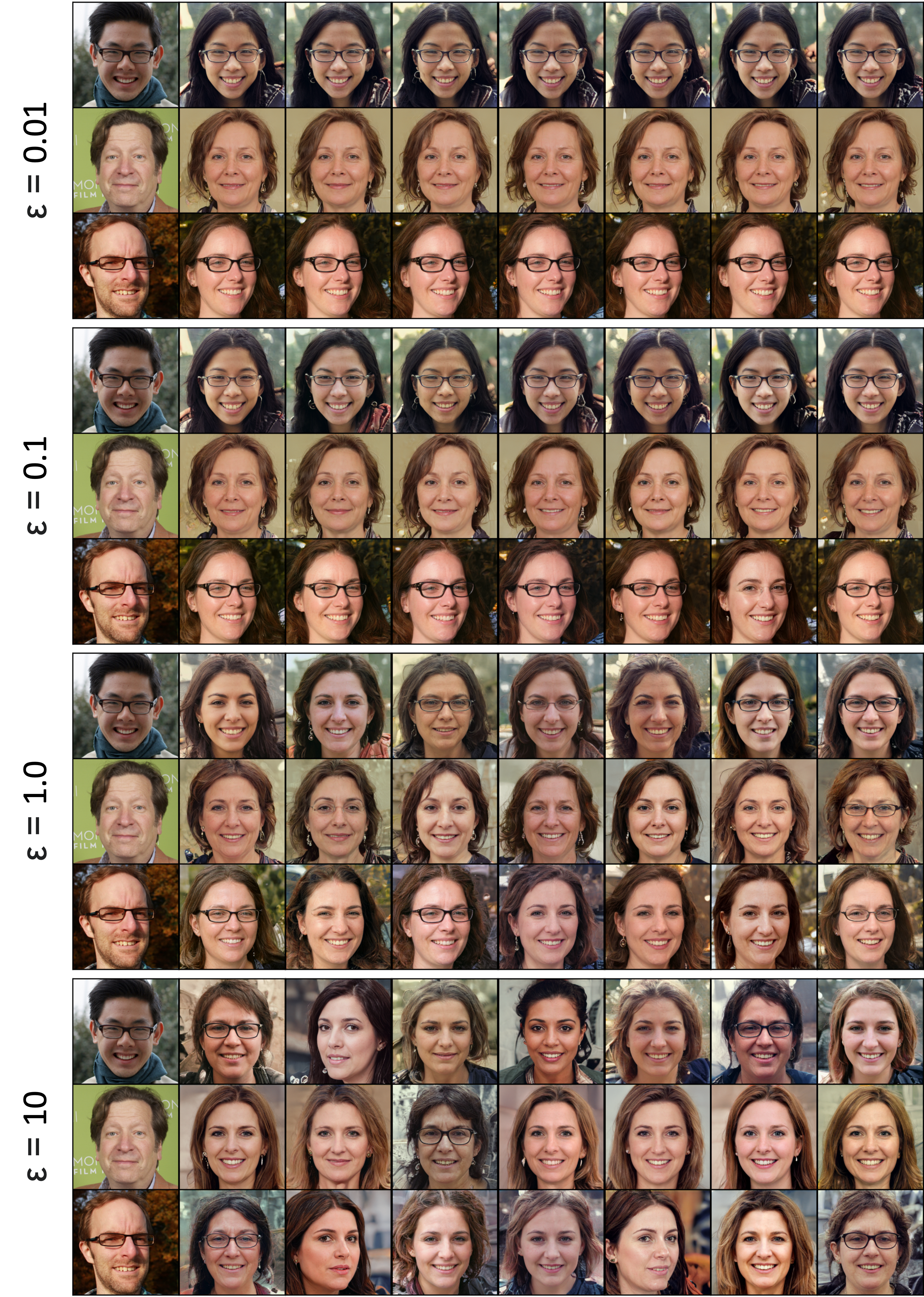}
    \caption{Image-to-image experiments with $\epsilon \in \{0.01, 0.1, 1, 10\}$.}
    \label{fig:appx_lightsbm_diversity}
\end{figure}

\clearpage

\end{document}